\newtheorem{thmdef}{Definition}
\newtheorem{thmprop}{Proposition}
\newtheorem{proposition}{Proposition}
\theoremstyle{remark}
\newtheorem{remark}{Remark}
\def\bu{\mathbf{u}}
\def\bE{\mathbf{E}}
\def\bw{\mathbf{w}}
\def\E{\mathbb{E}}
\def \R{\mathbb{R}}
\def\cP{\mathcal{P}}
\def\indic{\mathbbm{1}}
\def\cF{\mathcal F}
\def\cZ{\mathcal Z}
\def\cZ{\mathcal Z}
\def\bx{\mathbf{x}}
\def\bX{\mathbf{X}}
\def\bmu{\boldsymbol{\mu}}
\def\mmd{\mathrm{MMD}}
\newcommand\indep{\protect\mathpalette{\protect\independenT}{\perp}}
\def\independenT#1#2{\mathrel{\rlap{$#1#2$}\mkern2mu{#1#2}}}
\newcommand\rinv{\text{Rob}}
\title{Fairness and robustness in anti-causal prediction}
\author{\name Maggie Makar
  \email mmakar@umich.edu \\
  \addr Computer Science and Engineering \\
  University of Michigan \\
  Ann Arbor, MI 
   \AND
  \name Alexander D'Amour
  \email alexdamour@google.com \\
  \addr Google Research \\
  Cambridge, MA\\
}
\begin{document}

\maketitle

\begin{abstract}
    Robustness to distribution shift and fairness have independently emerged as two important desiderata required of modern machine learning models. While these two desiderata seem related, the connection between them is often unclear in practice.  Here, we discuss these connections through a causal lens, focusing on anti-causal prediction tasks, where the input to a classifier (e.g., an image) is assumed to be generated as a function of the target label and the protected attribute. By taking this perspective, we draw explicit connections between a common fairness criterion---separation---and a common notion of robustness---risk invariance.  These connections provide new motivation for applying the separation criterion in anticausal settings, and inform old discussions regarding fairness-performance tradeoffs. In addition, our findings suggest that robustness-motivated approaches can be used to enforce separation, and that they often work better in practice than methods designed to directly enforce separation. Using a medical dataset, we empirically validate our findings on the task of detecting pneumonia from X-rays, in a setting where differences in prevalence across sex groups motivates a fairness mitigation. Our findings highlight the importance of considering causal structure when choosing and enforcing fairness criteria.
\end{abstract}

\section{Introduction}
In real world applications of machine learning, there is a strong desire to enforce the intuitive criterion that models should not depend inappropriately on sensitive attributes. 
Such a desire is often formalized in terms of specific quantitative notions of fairness. 
Fairness criteria often focus on equalizing model behavior across different population subgroups defined by a sensitive attribute. 
When designing fair systems, practitioners face a number of choices: among them are the choice of fairness criterion, and how to implement it. 
For example, a practitioner may wish to choose between simply using an unconstrained model that maximizes overall performance, a model that makes predictions independently of the sensitive groups (independence), or a model that makes predictions independently of sensitive groups given the ground truth label (separation) \citep[see][Chapter 2]{barocas-hardt-narayanan}.

\begin{wrapfigure}{R}{0.25\textwidth}
\centering
\vspace{-1em}
\begin{tikzpicture}[var/.style={draw,circle,inner sep=0pt,minimum size=0.8cm}]
    \node (Xstar) [var] {$\bX^*$};
    \node (input) [var, below=0.75cm of Xstar, fill=black!10] {$\bX$};
    \node (label) [var, fill=black!10, right=0.5cm of Xstar] {$Y$};
    \node (metadata) [var, fill=black!10, right=0.5cm of input] {$V$};
    
    \path[->]
        (label) edge (Xstar)
        (Xstar) edge (input)
        (metadata) edge (input);
    \path[<->,dashed]
        (label) edge (metadata);
\end{tikzpicture}
\vspace{-0.5em}
\caption{Causal DAG of the setting in this paper. The main label $Y$ and the sensitive attribute $V$ generate observed input $\bX$, but $Y$ only affects $\bX$ through the sufficient statistic $\bX^*$.}
\label{fig:dags}
\end{wrapfigure}
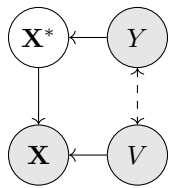

Here, we show that the causal structure of a problem can be a useful piece of context for choosing and enforcing a fairness criterion in a given application. 
We present this argument by making a connection to some recent insights on the relevance of causal structure to robust machine learning.
In principle the connection between robustness and fairness should not be surprising.
Similar to fairness methods, robustness methods also seek to regulate how classifiers depend on particular features; however, the details of the connection depend on the particular causal structure of the problem at hand.
In particular, we focus on an anti-causal prediction setting, where the input to a classifier (e.g., an image) is assumed to be generated as a function of the label and the sensitive attribute (see Figure 1).
In this context, we show that the connection between fairness and robustness can provide new motivations and methods for enforcing fairness criteria in practice.

Concretely, we use the example of detecting pneumonia from chest X-rays as a motivating example, inspired by \citet{jabbour2020deep}.
In this example, we assume that both the presence of pneumonia ($Y$) and the patient’s sex ($V$) causally influence the X-ray image ($\bX$) used to make the diagnosis.
In this context, practitioners may aim to make diagnoses that respect certain fairness criteria across sex groups.
As we show, under the causal structure of this problem, some of these fairness criteria map neatly onto robustness criteria that seek to preclude the classifier from using image features influenced only by $V$.

Informed by this causal structure, we provide a description of how fairness criteria and distributional robustness criteria align, and discuss practical implications of this alignment for motivating and enforcing fairness criteria in this setting.
Specifically, we focus on alignment between the separation fairness criterion---that the distributions of predictions within the positive and negative classes should be independent of the sensitive group---and risk invariance as a robustness criterion---that the predictive risk of a model remain invariant across a family of distribution shifts.
We show that:
\begin{itemize}[noitemsep,topsep=0pt, leftmargin=0.4in]
\item The separation fairness criterion implies risk invariance across a family of distribution shifts that change the base rate of the label $Y$ within sensitive groups $V$, but respect the causal structure of the problem.
This provides new perspective on discussions of fairness-performance tradeoffs when applying the separation criterion in practice.
\item In practice, algorithms designed to enforce risk invariance also enforce the separation criterion, in some cases more effectively than algorithms that attempt to directly incorporate the separation criterion as a regularizer.
We explore properties of one particular algorithm, presented in \citet{makar2021causally}, which uses a weighted regularizer based on the maximum mean discrepancy ($\mmd$), and compare it to an $\mmd$-based implementation of an empirical separation criterion.
\item For contrast, we show a conflict between the independence fairness criterion (often measured in terms of equalized predictions between $V$ groups, or demographic parity) and the risk invariance property.
\end{itemize}


A number of the connections that we describe here between fairness, robustness, and causal structure have been noted before in work aimed at improving robustness under input perturbations or distribution shift, most notably \citet{veitch2021counterfactual} and \citet{makar2021causally}.
Our main contribution is to draw out the practical implications of these results for fairness in a concrete setting.
Our findings support paying increased attention to the causal structure of a problem to inform the choice and implementation of fairness criteria in practice, even if these metrics are on their face ``oblivious'', or independent of causal structure.
This causal focus can have the advantage of highlighting connections to other desiderata and methodology.

\section{Related work}
Robustness to distribution shift and fairness are closely related, and many lines of work have aimed to highlight formal and empirical connections between them.
For example, \citet{sagawa2019distributionally} explored applying distributionally robust optimization to address worst-subgroup performance for under-represented groups. As another example, \citet{adragna2020fairness} show that using methods meant to induce robustness leads to ``more fair'' classifiers for internet comment toxicity. Along similar lines, \citet{pruksachatkun2021does} found that certified robustness approaches designed to ensure robustness of NLP methods against word substitution attacks can be used to reduce violations to the equalized odds criterion. 

A key perspective in our work is that many fairness-robustness relationships are mediated by causal structure.
In this sense, our work is most similar and complementary to \citet{veitch2021counterfactual}, in which the authors derive implications of counterfactual invariance to certain input perturbations, and show that these implications depend strongly on the causal structure of the problem at hand.
Here, we focus on a narrower setting, and highlight concrete conclusions about the relationship between easily measurable robustness and fairness metrics that are often used for evaluation in practice, with a greater focus on implications for fairness.


Notably, our use of causal ideas is distinct from another body of work that defines fairness criteria directly in terms of the causal model.
These include definitions of fairness that revolve around direct causal effects of sensitive attributes on outcomes \citep{kilbertus2017avoiding,nabi2018fair,zhang2018fairness}, or discrepancies between counterfactual outcomes \citep{kusner2017counterfactual,chiappa2019path}.
By contrast, we focus on ``oblivious'' fairness criteria that are not themselves a function of causal structure \citep{hardt2016equality}, but show that the causal structure of a problem can still inform when and how to use such a criterion.

\section{Background and preliminaries}\label{sec:prelims}


\paragraph{Setup} 

We consider a supervised learning setup where the task is to construct a predictor $f(\bX)$ that predicts a label $Y$ (e.g., pneumonia) from an input $\bX$ (e.g., chest X-ray).
In addition, we have a protected attribute $V$ (e.g., patient sex) available only at training time. 
Throughout, we will use capital letters to denote variables, and small letters to denote their value. 
Our training data consist of tuples $\mathcal D = \{(\bx_i, y_i, v_i)\}_{i=1}^{n}$ drawn from a source training distribution $P_s$.
We restrict our focus to the case where $Y$ and $V$ are binary and $f$ is a classifier.
Specifically, we will consider functions $f$ of the form $f = h(\phi(\bx))$, where $\phi$ is a representation mapping and $h$ is the final classifier. 

In this context, a practitioner may be interested in ensuring that the classifier $f(\bX)$ treats individuals from different groups fairly.
Fairness is operationalized by enforcing constraints on how the distribution of model predictions can differ across individuals from different groups (that is, with different values of $V$) or individuals in different label classes (that is, with different values of $Y$).


We focus on two well-established fairness criteria \citep[see][Chapter 2 for a full discussion]{barocas-hardt-narayanan}.
While these criteria are typically defined with respect to the predicted class (i.e., $\hat Y = \indic \{f(\bX) > \delta \}$ for some threshold $\delta$) we consider stronger fairness notions defined with respect to the predicted probabilities $f(\bX)$ \citep[see ``Analogues with Scores'' in][]{mitchell2021algorithmic}.
This focuses the exposition on issues relating to the quality of $f(\bX)$ independently of the choice of $\delta$. 

The first criterion, separation, requires that the distribution of predictions $f(\bX)$ be the same across groups $V$ conditional on the ground truth label $Y$, that is $f(\bX) \indep V \mid Y$.
Separation is often evaluated in terms of equalized odds (EO) \citep{hardt2016equality}, which examines whether the conditional expectations of the predictions for each ground truth class $Y$ are the same across sensitive groups $V$, i.e., that
\begin{align*}
    \E_{P_s}[f(\bX) \mid V = 0, Y=y] = \E_{P_s}[f(\bX) \mid V = 1, Y=y] \quad \forall y \in \{0, 1\}. 
\end{align*}

The second criterion, independence, requires that the distribution of predictions be the same overall across groups $V$.
Independence is often evaluated in terms of demographic parity (DP), which examines whether the expected prediction is the same across groups, i.e., that
\begin{align*}
    \E_{P_s}[f(\bX) \mid V = 0] = \E_{P_s}[f(\bX) \mid V = 1]. 
\end{align*}
Unlike EO which requires equality conditional on $Y$, DP requires a marginal form of equality. 

In practice, separation and independence are often enforced at training time using empirical measures of distributional discrepancies, while the effectiveness of these interventions are often evaluated by measuring how closely the model respects the EO and DP criteria on test data. 


\paragraph{Causal assumptions} 
The separation and independence criteria are ``oblivious'' criteria: they are constraints on the observable distributions of labels and predictions that do not depend on the form of the classifier or the mechanisms of the data generating process \citep{hardt2016equality}.
However, the implications of enforcing these criteria do depend on the causal structure of the problem.
Here, we outline the causal assumptions that underlie the anti-causal prediction setting that we study, and show how it connects to our running example of detecting pneumonia from chest X-rays.

We assume that $P_s$ has a generative structure shown in Figure~\ref{fig:dags}, in which the inputs $\bX$ are generated by the labels $(Y, V)$.
We assume that the labels $Y$ and $V$ are are correlated, but not causally related; that is, an intervention on $V$ does not imply a change in the distribution of $Y$, and vice versa.
Such correlation often arises through the influence of an unobserved third variable such as the environment from which the data is collected. 
We represent this in Figure~\ref{fig:dags} with the dashed bidirectional arrow. 

In addition to the specific DAG, we assume that there is a sufficient statistic $\bX^*$ such that $Y$ only affects $\bX$ through $\bX^*$, and $\bX^*$ can be fully recovered from $\bX$ via the function $\bX^* := e(\bX)$.
However, we assume that the sufficient reduction $e(\bX)$ is unknown, so we denote $\bX^*$ as unobserved in Figure~\ref{fig:dags}. 

While anti-causal and sufficiency assumptions are somewhat strong, they are also reasonable in a number of important contexts where machine learning is applied.
For example, the chest X-ray example plausibly satisfies these conditions.
First, the disease (pneumonia) is the true cause of abnormal findings reflected in the chest X-ray.
Secondly, there are few interactions between the presentation of pneumonia and sex in the radiograph, so it is plausible to assume that the sufficient features of the X-ray that are influenced by pneumonia $\bX^*$ can be recovered from the input image $\bX$.
More generally, these assumptions are likely to apply in settings where the goal is to recover a pre-existing ground truth label $Y$ from an input $X$, and $Y$ and $V$ to not interact substantially in generating $\bX$.

These assumptions are, of course, not universal.
For example, in many prediction settings, the label $Y$ is more plausibly caused by the input $\bX$.
This situation is often referred to as a causal prediction setting.
One example of a causal prediction task occurs when predicting the risk of cardiovascular disease ($Y$) from environmental risk factors ($\bX$) in settings where a sensitive attribute affects those environmental risk factors.
In this case, $Y$ could be a function of blood pressure ($\bX^*$), which is affected by environmental risk factors, which are in turn affected by the the sensitive attribute.
In this setting, the relationship between fairness and robustness criteria differs substantially from the relationships we outline below (see discussion in Section~\ref{sec:separation vs independence}).
Similarly, in many anti-causal prediction settings, there is substantial interaction between the sensitive group $V$ and the target label $Y$ in generating the input $\bX$.
For example, if $Y$ represents a heart condition, $V$ is the patient's age, and $\bX$ is a waveform from an electrocadiogram (ECG), the presentation of heart conditions in an ECG are very different depending on whether the patient is a child or an adult.
In this case, our characterization of the optimal risk invariant predictor (introduced below) would be different. 


Finally, when we discuss the properties of learning algorithms, we make an overlap assumption on the source distribution, $P_s$. Specifically we assume that
the support of $P_s(V)P_s(Y)$ is contained in the support of $P_s(V, Y)$. Intuitively, this assumption implies that we observe all combinations of $Y$ and $V$ during training time that will also appear in any target test distribution. Absent such an assumption, the behavior of $f$ on unobserved combinations is unlearnable using the observed data. 


\subsection{Robustness as Risk Invariance under Shifts in Dependence}
Here, we review some robustness criteria that have been introduced before in our anti-causal setting, with the goal of drawing a formal connection to fairness criteria.
Robustness criteria and fairness criteria often have distinct motivations.
While fairness is often motivated by a principle of non-discrimination,
robust or invariant predictive methods are often motivated by generalization to out-of-distribution settings.

Of particular interest here are scenarios where a ``shortcut'' association between inputs and target label changes between training and deployment time \citep{geirhos2020shortcut}.
A canonical example of shortcut learning is presented in the context of classifying animals from images in \citet{beery2018recognition}, where classifiers often failed when animals in the test set appeared in new and unusual locations: for example, cows, which frequently appeared in grassy locations in the training set, were misclassified when they appeared on beach backgrounds in the test set.
While any model might be prone to shortcut learning, recent empirical findings have shown that deep neural networks are especially prone to relying on shortcuts \citep{beery2018recognition,sagawa2019distributionally, sagawa2020investigation, ilyas2019adversarial,azulay2018deep,geirhos2018generalisation, d2020underspecification}. A flurry of recent papers suggest methods to create robust models that give predictions that are invariant to the spurious correlations or shortcuts \citep{sagawa2019distributionally, pmlr-v139-creager21a,arjovsky2019invariant,krueger2021out}. 


In this paper, we focus on a particular notion of robustness called risk invariance.
Following the robustness literature, we assume that the model $f$ is trained on a source distribution $P_s$, and measure its predictive risk on one or many target distributions $P_t$.
We write the generalization risk of a function $f$ on a distribution $P$ as $R_P = \E_{\bX, Y \sim P}[\ell(f(\bX), Y)]$, where $\ell$ is a predictive loss (we define this as the logistic loss in arguments that follow).
We say that a model $f$ is risk invariant across a family of distributions $\cP$ if its predictive risk is the same for each target distribution $P_t$ in that family.
Here, we consider families of target distributions that can be generated from $P_s$ by interventions on the causal DAG in Figure~\ref{fig:dags}.
Specifically, we consider interventions on the dependence between $Y$ and $V$ that keep the marginal distribution of $Y$ constant.
\footnote{This notion of risk invariance could be generalized to include cases where the marginal distribution of $Y$ also changes, but would introduce some notational overhead. It would require that a re-weighted risk be invariant across such a family.}
Each distribution in this family can be obtained by replacing the source conditional distribution $P_s(V \mid Y)$ with a target conditional distribution $P_t(V \mid Y)$:
\begin{align}
\cP = \{P_s(\bX \mid \bX^*, V) P_s(\bX^* \mid Y) P_s(Y) P_t(V \mid Y)\},\label{eq:intervention family}
\end{align}
This family allows the marginal dependence between $Y$ and $V$ to change arbitrarily.

For our analysis, one distribution contained in the set $\cP$ will be important: the distribution where $Y \indep V$, i.e., 
$
{P^\circ := P_s(\bX \mid \bX^*, V) P_s(\bX^* \mid Y) P_s(Y) P^\circ(V)}.
$
We refer to $P^\circ$ as the idealized distribution. In the chest x-ray example, $P^\circ$ is the distribution where a pneumonia patient is equally likely to be female or male, and similarly a healthy patient is equally likely to be female or male. We formally define this notion of robustness next.
\begin{thmdef}[Risk Invariance]\label{def:robustness}
Given the family $\cP$ defined in equation~\eqref{eq:intervention family}, we say that a set of predictors $\cF_{\rinv}$ is risk invariant if they have the same risk for all $P_t \in \cP$, i.e., if
$
\cF_{\rinv} = \{f : R_{P_t}(f) = R_{P_t'}(f) \quad\forall P_t, P_t' \in \cP\}.
$
\end{thmdef}


Risk invariance arises naturally as a criterion for checking whether a model's prediction depends on ``shortcut'' features in $X$ that are influenced by $V$ but not by $Y$.
In our setting, any predictor that does not rely on such shortcuts will necessarily be risk invariant \citep{makar2021causally,veitch2021counterfactual}.

\section{Fairness and robustness in the anti-causal setting}\label{sec:theory}

In this section, we discuss connections between fairness and robustness that are specific to our anti-causal setting.
These connections inform considerations about how practitioners may wish to choose between using an unconstrained model, or enforcing separation or independence.
In particular, we show that there is substantial alignment between risk invariance and separation in this setting, which can provide motivation for preferring separation over an unconstrained model or a model that enforces independence.

\subsection{Separation and Predictive Performance}\label{sec:separation and robustness}

Often the most difficult choice in deciding to apply fairness criteria to a classification problem is whether to constrain the model at all.
Some of the oldest discussions in applying fairness to machine learning
\footnote{See also, discussions that pre-date machine learning, such as discussions of fairness in education assessment as reviewed in \citet{hutchinson201950}.}
center on tradeoffs between parities enforced by fairness-constrained models and overall predictive performance of unconstrained models \citep[see, e.g.,][]{corbett2017algorithmic,mitchell2021algorithmic}. 
Here, we revisit this discussion in our anti-causal setting, and show that separation can be motivated on purely performance-oriented grounds if the notion of performance is expanded to include predictive risk under distribution shifts.
In addition, we show that the optimal risk invariant predictor in this setting satisfies separation, suggesting that algorithms that target the optimal risk invariant predictor can be effective for learning models that are fair according to the separation criterion.

Our discussion here revolves around two results.
First, in the anti-causal setting in Figure~\ref{fig:dags}, separation implies risk invariance.
Secondly, the optimal risk invariant predictor in this setting satisfies separation.
We present these two results formally in turn, then discuss their implications.
We include the proofs of these statements to highlight how the results depend on our core causal assumptions (anti-causal structure and the sufficiency of $\bX^*$).

\begin{proposition}\label{prop:separation is robust}
In the anti-causal setting shown in Figure~\ref{fig:dags}, suppose that a a predictor $f$ satisfies separation in the training distribution, that is, $f(\bX) \indep_{P_s} V \mid Y$.
Then $f$ is risk-invariant with respect to the family of target distributions $\cP$ defined in \eqref{eq:intervention family}.
\end{proposition}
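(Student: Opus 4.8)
The plan is to compute the risk $R_{P_t}(f)$ for an arbitrary member $P_t \in \cP$ and show that it does not depend on the choice of $P_t(V \mid Y)$, which is the only quantity that varies across the family in \eqref{eq:intervention family}. First I would decompose the risk by conditioning on $Y$ and then on $V$. Since every $P_t \in \cP$ shares the same marginal $P_s(Y)$, and since the loss $\ell(f(\bX), Y)$ is a function of $f(\bX)$ and $Y$ only, I can write
\[
R_{P_t}(f) = \sum_{y} P_s(Y = y) \sum_{v} P_t(V = v \mid Y = y)\, g(v, y), \qquad g(v,y) := \E_{P_t}[\ell(f(\bX), y) \mid V = v, Y = y].
\]
The goal is then to argue that the dependence on $P_t(V \mid Y)$ washes out of both the inner term $g(v,y)$ and the averaging weights.

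The key structural step is to show that the conditional law of $\bX$ given $(V, Y)$ is invariant across $\cP$, so that $g(v,y)$ is the same for every $P_t$ in the family. This is where the causal assumptions enter. Factoring $P_t(\bX \mid V, Y) = \int P_t(\bX \mid \bX^*, V, Y)\,P_t(\bX^* \mid V, Y)\, d\bX^*$, the sufficiency of $\bX^*$ gives $P_t(\bX \mid \bX^*, V, Y) = P_s(\bX \mid \bX^*, V)$, since $Y$ acts on $\bX$ only through $\bX^*$; and reading off the DAG in Figure~\ref{fig:dags} gives $\bX^* \indep V \mid Y$ (the only paths between $\bX^*$ and $V$ run through the $Y \leftrightarrow V$ edge, which is blocked by conditioning on $Y$, and through the collider at $\bX$, which is not conditioned on), so $P_t(\bX^* \mid V, Y) = P_s(\bX^* \mid Y)$. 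Both factors are precisely the fixed components of the factorization in \eqref{eq:intervention family}, so $P_t(\bX \mid V = v, Y = y)$, and hence $g(v,y)$, does not depend on $P_t(V \mid Y)$.

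Next I would invoke separation. Because $P_s$ itself belongs to $\cP$ (taking $P_t(V \mid Y) = P_s(V \mid Y)$), the separation hypothesis $f(\bX) \indep_{P_s} V \mid Y$ says that the conditional law of the scalar $f(\bX)$ given $(V = v, Y = y)$ does not depend on $v$. Since $g(v,y)$ is a functional of exactly this conditional law and of $y$, it follows that $g(0,y) = g(1,y) =: g(y)$ for each $y$. Substituting back, the inner average collapses, because the weights $P_t(V = v \mid Y = y)$ sum to one, leaving $R_{P_t}(f) = \sum_y P_s(Y = y)\, g(y)$, a quantity with no remaining dependence on $P_t$. This establishes risk invariance.

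I expect the main obstacle to be the structural step, namely verifying that $P_t(\bX \mid V, Y)$ is genuinely invariant across the family. This requires using both causal assumptions in tandem---the sufficiency reduction to handle the $\bX^* \to \bX$ mechanism, and the d-separation $\bX^* \indep V \mid Y$ to pin down $P_t(\bX^* \mid V, Y)$---and checking that the overlap assumption on $P_s$ ensures these conditionals are well-defined for both values of $v$. Once this invariance is in hand, the collapse of the $V$-average via separation is essentially bookkeeping.
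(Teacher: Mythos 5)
Your proof is correct and takes essentially the same route as the paper's: the identical decomposition of $R_{P_t}(f)$ into subgroup risks weighted by $P_s(Y=y)$ and $P_t(V=v \mid Y=y)$, the same observation that the conditional loss distribution given $(V,Y)$ is invariant across $\cP$, and the same collapse of the $V$-average once separation forces the subgroup risks to agree across $v$. The only difference is that you spell out the structural invariance of $P_t(\bX \mid V, Y)$ via sufficiency and the d-separation $\bX^* \indep V \mid Y$, a step the paper asserts in a single sentence directly from the factorization defining $\cP$.
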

\begin{proof}
We show this by decomposing the risk of the model $f$ on any target distribution $P_t \in \cP$ in terms of the conditional risk given $V$ and $Y$.
Within the family $\cP$, $P_t(\ell(f(\bX), Y) \mid Y=y, V=v)$ is the same for all $P_t \in \cP$; thus, we can write the risk conditional on $Y$ and $V$ independently of the target distribution.
Let $R_{vy} := \E_{P_t}[\ell(f(\bX), Y) \mid V=v, Y=y]$ for any $P_t \in \cP$.
The overall risk on a target distribution $P_t$ can be written as the weighted average of these subgroup risks
\begin{align}
R_{P_t} = \sum_{y \in \{0,1\}} P_s(Y=y)\left[ P_t(V=0 \mid Y=y) R_{0y} + P_t(V=1 \mid Y=y)R_{1y}\right].\label{eq:risk decomp}
\end{align}
Now, the separation criterion states that $f(\bX) \indep V \mid Y$, which implies $\ell(f(\bX), Y) \indep V \mid Y$, which implies that $R_{0y} = R_{1y}$ for $y \in \{0,1\}$.
Thus, the terms in the summation \eqref{eq:risk decomp} are the same regardless of the $P_t(V=v \mid Y=y)$ factors, and thus the risk is invariant for all $P_t \in \cP$.
\end{proof}

\begin{proposition}\label{prop:optimal robust has separation}
In the anti-causal setting shown in Figure~\ref{fig:dags}, (a) the optimal risk invariant predictor with respect to $\cP$ has the form $f^*(\bX) = \E[Y \mid \bX^*]$,
\footnote{Note that $\E_{P_t}[Y \mid \bX^*]$ is the same for all $P_t \in \cP$, so we omit the distributional subscript on this expectation.}
and (b), this predictor satisfies separation.
\begin{proof}
Part (a) is shown as Proposition 1 in \citet{makar2021causally}.
The key points of the proof are that (1) $\E[Y \mid \bX^*]$ is representable by a function $f^*(\bX)$ under the assumptions made in Section~\ref{sec:prelims} (namely, the assumption that $\bX^*$ can be written as $e(\bX)$; (2) under the uncorrelated distribution $P^\circ$, $\E[Y \mid \bX^*]$ is Bayes optimal; and (3) $\E[Y \mid \bX^*]$ is risk invariant.
(2) and (3) imply that no other risk invariant predictor can have lower risk across $\cP$.

Part (b) follows from the fact that $\bX^*$ is d-separated from $V$ conditional on $Y$ in the DAG in Figure~\ref{fig:dags}.
Thus, $\E[Y \mid \bX^*] \indep_{P_t} V \mid Y$ for all $P_t \in \cP$.
\end{proof}
\end{proposition}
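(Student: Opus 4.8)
The plan is to handle the two claims separately: part (a) characterizes the optimal risk-invariant predictor, and part (b) verifies that this predictor satisfies separation. Since part (a) is essentially a restatement of a result established in \citet{makar2021causally}, I would cite it and only reconstruct the three ingredients that make the argument go through, emphasizing where each relies on the anti-causal structure of Figure~\ref{fig:dags}. Part (b) is then the genuinely new step, and it reduces to a single d-separation check.

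For part (a), the key reduction is that every risk-invariant predictor has constant risk across $\cP$, so in particular its risk equals its risk on the idealized distribution $P^\circ \in \cP$; minimizing risk over $\cF_{\rinv}$ is therefore the same as minimizing $P^\circ$-risk over $\cF_{\rinv}$. First I would note that the unconstrained Bayes-optimal predictor for the logistic loss on $P^\circ$ is the conditional probability $\E_{P^\circ}[Y \mid \bX]$. Next, using that $Y \indep_{P^\circ} V$ together with the sufficiency assumption (that $Y$ influences $\bX$ only through $\bX^*$), I would argue that $Y \indep_{P^\circ} \bX \mid \bX^*$, so that $\E_{P^\circ}[Y \mid \bX] = \E_{P^\circ}[Y \mid \bX^*] = \E[Y \mid \bX^*]$; the assumption $\bX^* = e(\bX)$ then guarantees this object is representable as a function $f^*(\bX)$. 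Finally, because $f^*$ depends on $\bX$ only through $\bX^*$ and the joint law of $(\bX^*, Y)$ is held fixed across $\cP$, the predictor $f^*$ is itself risk-invariant. An unconstrained $P^\circ$-risk minimizer that happens to lie in $\cF_{\rinv}$ is automatically the minimizer over the subset $\cF_{\rinv}$, which yields the claim.

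For part (b), I would show $f^*(\bX) \indep_{P_t} V \mid Y$ for every $P_t \in \cP$. Since $f^*(\bX) = \E[Y \mid \bX^*]$ is a deterministic function of $\bX^*$, it suffices to establish $\bX^* \indep_{P_t} V \mid Y$. Reading the factorization in \eqref{eq:intervention family}, each $P_t \in \cP$ corresponds to a DAG in which $\bX^*$ and $V$ share only the parent $Y$ (via the factors $P_s(\bX^* \mid Y)$ and $P_t(V \mid Y)$) and otherwise meet only at the collider $\bX$. Conditioning on $Y$ blocks the fork $\bX^* \leftarrow Y \rightarrow V$, while the collider path $\bX^* \rightarrow \bX \leftarrow V$ remains blocked because we condition on neither $\bX$ nor any descendant of it; hence $\bX^* \indep_{P_t} V \mid Y$, and separation follows.

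The main obstacle I anticipate is the reduction in part (a) of unconstrained Bayes optimality on $P^\circ$ to a function of $\bX^*$ alone: one must argue carefully that $Y \indep_{P^\circ} \bX \mid \bX^*$ holds under $P^\circ$ specifically, since it fails for general $P_t \in \cP$ where shortcut features become informative about $Y$ through their correlation with $V$. This is exactly the step where both the anti-causal direction and the defining independence $Y \indep_{P^\circ} V$ are indispensable. Part (b), by contrast, is routine once the intervened factorization is read as a DAG; the only care needed is to confirm that the d-separation holds uniformly over the whole family $\cP$ rather than merely at $P_s$.
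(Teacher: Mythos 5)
Your proposal is correct and takes essentially the same route as the paper: part (a) rests on the same three ingredients the paper attributes to Proposition 1 of \citet{makar2021causally} (representability of $\E[Y \mid \bX^*]$ via $\bX^* = e(\bX)$, Bayes optimality under $P^\circ$, and risk invariance of $f^*$, which together rule out any better risk-invariant predictor), and part (b) is the same d-separation argument. Your extra detail---deriving $Y \indep_{P^\circ} \bX \mid \bX^*$ to identify the Bayes-optimal predictor with $\E[Y \mid \bX^*]$, and explicitly checking the fork $\bX^* \leftarrow Y \rightarrow V$ and the collider $\bX^* \rightarrow \bX \leftarrow V$ uniformly over $\cP$---simply fills in steps the paper's sketch leaves implicit.
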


\begin{remark}
\citet{veitch2021counterfactual} also provide a number of related results in their study of the implications of counterfactual invariance in anti-causal settings.
Counterfactual invariance requires that the predictions of a model be invariant across label-preserving counterfactual versions of the input, such as the counterfactual that we would observe if the sensitive attribute were different.
Our findings concern a narrower case where there exists a sufficient statistic $\bX^*$, which \citet{veitch2021counterfactual} refer to this as the ``purely spurious'' case.
In their findings, \citet{veitch2021counterfactual} show that counterfactual invariance implies the separation criterion generally in anticausal settings, and that the optimal counterfactually invariant predictor in also minimax optimal in the purely spurious case.
Here, our results speak more directly to the implications of fairness criteria in the purely spurious setting, and, by focusing on the weaker risk invariance property, we make the connection without the conceptual ambiguity of defining counterfactuals with respect to sensitive attributes \citep[see, e.g.][for discussion of this point]{kohler2018eddie}.

\end{remark}

These results have several implications in the motivation and practical application of the separation criterion.
First, they provide a counterpoint to standard discussions of fairness-performance tradeoffs.
In particular, they highlight the importance of specifying the distribution (or distributions) on which fairness and performance are defined.
On one hand, in our setting, the Bayes optimal in-distribution predictor, $\E_{P_s}[Y \mid \bX]$, does not in general satisfy separation (this follows because $\bX$ is not d-separated from $V$, so $\E[Y \mid \bX]$ is not independent of $V$).
This implies that a model satisfying separation must have lower-than-optimal in-distribution accuracy.
On the other hand, because Proposition~\ref{prop:separation is robust} shows that separation implies risk invariance, the performance of a model that satisfies separation cannot degrade if the model is deployed in a causally consistent scenario included in $\cP$.
And in fact, as Proposition~\ref{prop:optimal robust has separation} shows, the best possible risk invariant model satisfies separation, implying that there is no tradeoff between optimal \emph{invariant} performance and this fairness criterion.
Thus, if model performance beyond the training distribution $P_s$ is important in an application, risk invariance can provide a purely performance-oriented motivation for enforcing a separation criterion.
We note in passing that since separation and sufficiency are at odds, risk invariant models in the anticausal settings will not satisfy sufficiency.

Secondly, Proposition~\ref{prop:optimal robust has separation} suggests that, in our anti-causal setting, criteria designed for targeting optimal risk invariant predictors may be effective for learning classifiers that satisfy separation, even if they do not imply separation directly.
For example, consider the approach proposed in \citet{makar2021causally} that targets the optimal risk invariant predictor by enforcing an \emph{ideal distribution independence} (IDI) criterion.

\begin{thmdef}[Ideal Distribution Independence]\label{def:IDI}
A model $f$ satisfies IDI iff $f(\bX) \indep_{P^\circ} V$; i.e., if its predictions $f(\bX)$ are independent of the sensitive attribute $V$ under the ideal distribution $P^\circ$, where $Y \indep_{P^\circ} V$.
\end{thmdef}

\noindent While the IDI criterion does not itself imply separation, \citet{makar2021causally} show that enforcing IDI at training time can lead to more efficient recovery of the optimal risk invariant predictor, which does satisfy separation.
This suggests that enforcing risk invariance in practice can also help to close fairness gaps.
We demonstrate this empirically in our chest X-ray application presented in section~\ref{sec:experiments}.

\subsection{Separation versus Independence}\label{sec:separation vs independence}

It is useful to note that considering the robustness properties of a predictor in its particular causal setting also distinguishes between different fairness criteria.
In particular, in non-trivial instances of our anti-causal setting, there is a conflict between the optimal risk invariant predictor and independence.

\begin{proposition}\label{prop:independence bad}
Let $f^*(\bX) = \E[Y \mid e(\bX)] = \E[Y \mid \bX^*]$ be the optimal risk invariant predictor with respect to $\cP$.
In addition, assume that $\E[f(\bX^*) \mid Y=y]$ is a non-trivial function of $y$, i.e., that the value of $Y$ actually affects the expectation of the sufficient statistic $\bX^*$.
Then for any distribution $P_t \neq P^\circ$ in $\cP$, $f^*(\bX) \not\indep V$ and independence is not satisfied.
\begin{proof}
Note that for each $v$,
$\E_{P_t}[f^*(\bX) \mid V=v] = \sum_{y \in \{0,1\}} \E[f^*(\bX) \mid Y=y] P_t(Y=y \mid V=v)$.
For any $P_t$ where $Y \not \indep V$, the weights $P_t(Y = y \mid V=v)$ in this summation will differ as a function of $v$.
By assumption, the expectations $\E[f^*(\bX) \mid Y=y]$ differ for different values of $y$, so changing their weights in the summation will change the sum.
Thus, for $v \neq v'$, $\E_{P_t}[f^*(\bX) \mid V=v] \neq \E_{P_t}[f^*(\bX) \mid V=v']$, which implies $f^*(\bX) \not \indep V$.
\end{proof}
\end{proposition}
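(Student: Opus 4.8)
The plan is to show that independence, which at the level of scores requires the group-conditional mean $\E_{P_t}[f^*(\bX) \mid V=v]$ to coincide for $v=0$ and $v=1$, fails by exhibiting a nonzero difference between these two conditional means for any $P_t \neq P^\circ$. First I would condition on the binary label $Y$ via the law of total expectation,
\[
\E_{P_t}[f^*(\bX) \mid V=v] = \sum_{y \in \{0,1\}} \E_{P_t}[f^*(\bX) \mid Y=y, V=v]\, P_t(Y=y \mid V=v),
\]
so that the entire argument reduces to understanding the inner within-class mean.

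The crux is that $f^*$ already satisfies separation. By Proposition~\ref{prop:optimal robust has separation}(b) we have $f^*(\bX) \indep_{P_t} V \mid Y$ for every $P_t \in \cP$ — this is exactly where the causal assumption that $\bX^*$ is d-separated from $V$ given $Y$ does the work. Consequently the within-class mean $\E_{P_t}[f^*(\bX) \mid Y=y, V=v]$ does not depend on $v$, nor on $P_t$ (since the factor $P_s(\bX^* \mid Y)$ is held fixed across the family and $f^*$ is a function of $\bX^*$ alone); I would denote it $m_y := \E[f^*(\bX) \mid Y=y]$. Substituting and using $P_t(Y=0 \mid V=v) + P_t(Y=1 \mid V=v) = 1$, the conditional mean collapses to the affine form
\[
\E_{P_t}[f^*(\bX) \mid V=v] = m_0 + (m_1 - m_0)\, P_t(Y=1 \mid V=v).
\]

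From here I would take the difference across the two groups,
\[
\E_{P_t}[f^*(\bX) \mid V=1] - \E_{P_t}[f^*(\bX) \mid V=0] = (m_1 - m_0)\bigl[P_t(Y=1 \mid V=1) - P_t(Y=1 \mid V=0)\bigr],
\]
and argue that it is a product of two nonzero factors. The first factor $m_1 - m_0$ is nonzero precisely by the non-degeneracy hypothesis that $\E[f(\bX^*) \mid Y=y]$ is a non-trivial function of $y$. The second factor is nonzero exactly when $Y \not\indep_{P_t} V$, which holds for $P_t \neq P^\circ$, since $P^\circ$ is the member of $\cP$ under which label and attribute are independent. A nonzero product forces the two group means to differ, so $f^*(\bX) \not\indep V$ and independence is violated.

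The computation is short, so the main obstacle is conceptual bookkeeping rather than algebra. I would take care that the relevant conditional expectations are well-defined, which the overlap assumption on $P_s$ guarantees by ensuring all four events $\{Y=y, V=v\}$ have positive probability; that the hypothesis as stated, $\E[f(\bX^*) \mid Y=y]$, is reconciled with $m_y = \E[f^*(\bX) \mid Y=y]$ through the fact that $f^*$ depends on $\bX$ only through $\bX^*$; and that the condition actually driving the result is non-independence $Y \not\indep_{P_t} V$ rather than the nominal $P_t \neq P^\circ$. I expect the separation step to carry all the real weight, with everything else following from the binary structure of $Y$ and $V$.
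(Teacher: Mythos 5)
Your proof is correct and takes essentially the same route as the paper's: both decompose $\E_{P_t}[f^*(\bX) \mid V=v]$ over the binary label $Y$, use the separation property of $f^*$ (from the d-separation of $\bX^*$ and $V$ given $Y$) to replace the within-class means by $v$-independent quantities $m_y$, and conclude that the group difference is a product of the two nonzero factors $(m_1 - m_0)$ and $P_t(Y=1 \mid V=1) - P_t(Y=1 \mid V=0)$. Your closing observation---that the hypothesis really doing the work is $Y \not\indep_{P_t} V$ rather than the nominal $P_t \neq P^\circ$---is a sharp and correct refinement of a point the paper's proof only handles implicitly when it restricts to ``any $P_t$ where $Y \not\indep V$.''
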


If downstream performance of a model on shifted distributions is an important consideration in an application, this result provides motivation against using independence as a fairness criterion, because it implies a tradeoff with the best achievable invariant risk.
We demonstrate the practical implications of this tradeoff in our chest X-ray example in section \ref{sec:experiments}.

Importantly, we note that this conflict between the risk of a model on downstream applications and independence can change depending on the causal structure of the problem.
In particular, \citet{veitch2021counterfactual} show that in some causal prediction settings, where $V$ causes $\bX$, but $\bX$ causes $Y$, independence is compatible with the optimal predictor that satisfies a related robustness criterion that they call counterfactual invariance \citep[see Theorem 4.2]{veitch2021counterfactual}.
This highlights the importance of considering causal structure when deriving the implications of enforcing particular fairness criteria in a given application.
However, we leave further discussion of the causal prediction scenario, and the related implications for independence and counterfactual invariance criteria for future work.

The main implication of this statement is that it opens the door for practitioners to use robustness methods to develop models that also satisfy EO. We leverage this finding by utilizing WM-MMD, described in section~\ref{sec:method}, which was originally designed to enforce robustness to enforce EO. As our empirical analysis shows, WM-MMD displays better properties than C-MMD, which is typically used to enforce the EO criterion. 

The second main statement is algorithm dependent, it shows that models which satisfy the $\mmd$-based robustness criterion also satisfy the EO but not the DP criterion. 

For the sake of simplicity we consider the special case of linear models, where $\phi$ is a linear mapping, i.e., $\phi(\bx) = \bw^\top \bx$, and $h$ is the sigmoid, i.e., $h(x)= \sigma(x) = \sfrac{1}{\left(1 + \exp(-x)\right)}$. Extensions of our analysis to more complex neural networks are possible (e.g., through approaches studied in \cite{golowich2018size}). 

We assume that the optimal function (i.e., the true function) belongs to a function class $\cF$ defined as: 
\begin{align}
\cF & := \{f: \bx \mapsto \sigma(\bw^\top \bx), \|\bw\|_2 \leq A\}\label{eq:L2 class}. 
\end{align}
for some arbitrary $A>0$. The restriction that $A >0$ simply excludes trivial predictors that output the same prediction for all $\bX$ from the optimal set. 

\begin{thmprop} For an arbitrary predictor $f \in \cF$:
\begin{enumerate}
    \item If $f$ satisfies DP, i.e., if $\bE_{P_s}[f(\bX) | V=0] = \bE_{P_s}[f(\bX) | V=1]$, then $f \not \in  \cF_{\rinv}$, unless $P_s = P^\circ$
    \item If $f$ satisfies EO, i.e., if 
$\bE_{P_s}[f(\bX) | Y=y, V=0] = \bE_{P_s}[f(\bX) | Y=y, V=1]$, for $y \in \{0, 1\}$, then $f \in  \cF_{\rinv}$.
\end{enumerate}
\end{thmprop}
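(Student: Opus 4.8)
The plan is to route both parts through the exact risk decomposition already used in Proposition~\ref{prop:separation is robust}. For any $P_t\in\cP$ we can write
\begin{align*}
R_{P_t}=\sum_{y\in\{0,1\}}P_s(Y=y)\bigl[P_t(V=0\mid Y=y)R_{0y}+P_t(V=1\mid Y=y)R_{1y}\bigr],
\end{align*}
where $R_{vy}:=\E_{P_t}[\ell(f(\bX),Y)\mid V=v,Y=y]$ is common to all $P_t\in\cP$. Since the family \eqref{eq:intervention family} lets $P_t(V=0\mid Y=y)$ range freely, each bracket is affine in that weight, so it is constant across $\cP$ \emph{iff} $R_{0y}=R_{1y}$. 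Hence $f\in\cF_{\rinv}$ is \emph{equivalent} to $R_{0y}=R_{1y}$ for each $y$, and the whole proposition reduces to relating this risk-matching condition to statements about the mean prediction $m_{vy}:=\E_{P_s}[f(\bX)\mid V=v,Y=y]$.

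For part 2 I would show that EO, i.e. $m_{0y}=m_{1y}$, forces $R_{0y}=R_{1y}$, after which the equivalence above gives $f\in\cF_{\rinv}$ immediately. The hard part is exactly this implication: the logistic loss is a strictly convex, nonlinear function of the score $\bw^\top\bX$, so equality of the conditional \emph{mean} of $\sigma(\bw^\top\bX)$ does not in general imply equality of the conditional \emph{expected loss} (a two-point score distribution with the same mean of $\sigma$ but different mean of $\log(1+e^{\mp\bw^\top\bX})$ already breaks the naive implication). What rescues it is the anti-causal structure together with the restriction $f\in\cF$ from \eqref{eq:L2 class}: conditioning on $Y=y$, the only channel by which $V$ influences $\bX$, and hence the scalar $\bw^\top\bX$, is through the shortcut features that are $d$-separated from $\bX^*$ in Figure~\ref{fig:dags}. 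I would make this precise by arguing that varying $v$ at fixed $y$ perturbs the conditional law of $\bw^\top\bX$ only within a family whose risk-relevant content is pinned down by its mean, so that matching means (EO) matches the full conditional risk. This structural reduction is the crux, and the place where linearity of $\phi$ and the form of $\cF$ are essential.

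For part 1 I would run the same equivalence in reverse: if $f\in\cF_{\rinv}$ then $R_{0y}=R_{1y}$, and by the same structural bridge this yields EO, so $m_{0y}=m_{1y}=:m_y$. A direct application of the law of total expectation, $\E_{P_s}[f(\bX)\mid V=v]=\sum_y m_y\,P_s(Y=y\mid V=v)$, then collapses (using $P_s(Y=1\mid V)=1-P_s(Y=0\mid V)$) to
\begin{align*}
\E_{P_s}[f(\bX)\mid V=0]-\E_{P_s}[f(\bX)\mid V=1]=(m_0-m_1)\bigl(P_s(Y=0\mid V=0)-P_s(Y=0\mid V=1)\bigr).
\end{align*}
DP requires the left side to vanish, which can happen only if $m_0=m_1$ or $P_s(Y\mid V=0)=P_s(Y\mid V=1)$; the latter is precisely $Y\indep_{P_s}V$, i.e. $P_s=P^\circ$. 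I would then rule out the degenerate case $m_0=m_1$ (a predictor with no class-conditional signal) exactly as in Proposition~\ref{prop:independence bad}, via the non-triviality guaranteed by $A>0$ together with the assumption that $Y$ genuinely moves $\bX^*$. This gives the contrapositive of part 1: a risk-invariant $f$ cannot satisfy DP unless $P_s=P^\circ$.

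The single genuine obstacle, shared by both parts, is the nonlinearity bridge in the middle paragraph: equality of conditional mean predictions (EO) is strictly weaker than equality of conditional risks (risk invariance) for an arbitrary distribution of $\bw^\top\bX$, and the implication holds only because the anti-causal/sufficiency structure constrains how $V$ may reshape the conditional score law. Establishing that this structure forces the conditional law of $\bw^\top\bX$ to be mean-determined in the risk sense is where the argument must be most careful; everything else is the exact affine-in-the-weights decomposition and the short total-expectation computation above.
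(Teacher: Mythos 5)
Your reduction is right as far as it goes: since each bracket in the risk decomposition is affine in $P_t(V=0\mid Y=y)$ and the family in equation~\eqref{eq:intervention family} lets that weight range freely, $f\in\cF_{\rinv}$ is indeed equivalent to $R_{0y}=R_{1y}$ for each $y$, and your total-expectation computation for part 1 is the same algebra the paper uses elsewhere. The genuine gap is the one you yourself flag and then defer: the claim that the anti-causal structure plus linearity makes the conditional law of $\bw^\top\bX$ ``mean-determined in the risk sense,'' so that mean-based EO forces $R_{0y}=R_{1y}$. Nothing in Figure~\ref{fig:dags} or in the class $\cF$ delivers this. Conditional on $Y=y$, the DAG restricts \emph{which} features $V$ may influence, but not \emph{how}: $V$ can alter the variance or shape of the score distribution while $\E_{P_s}[\sigma(\bw^\top\bX)\mid V=v,Y=y]$ stays matched across $v$, in which case the logistic risks $R_{0y}$ and $R_{1y}$ differ, and your own two-point counterexample goes through unimpeded inside the model class. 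Since your part 1 invokes the same bridge in reverse (equal conditional risks $\Rightarrow$ equal conditional mean predictions), both halves rest on an implication that is false at this level of generality; the paper's own inline note questioning whether equal risk relates to equal expected predictions when the loss is nonlinear in $f(\bX)$ raises exactly this worry. A smaller but real secondary issue: ruling out $m_0=m_1$ ``as in Proposition~\ref{prop:independence bad}'' does not transfer, because that proposition's non-triviality hypothesis concerns the optimal predictor $\E[Y\mid\bX^*]$, whereas an arbitrary $f\in\cF$ can have $m_0=m_1$ without being constant (take $\bw$ orthogonal to every signal direction), and such an $f$ satisfies DP while being risk invariant.

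The paper's proof never attempts your bridge; it proves the proposition for the MMD implementations of the criteria in the linear class, using two devices absent from your proposal. First, because the MMD with a characteristic kernel metrizes distributional equality, C-MMD$(f)=0$ certifies $P_s(f(\bX)\mid V=0,Y=y)=P_s(f(\bX)\mid V=1,Y=y)$ --- full conditional distributional equality, i.e., separation in the sense of Proposition~\ref{prop:separation is robust} --- from which equality of the conditional loss distributions, and hence $R_{0y}=R_{1y}$, follows immediately with no moment-to-risk step. Second, and alternatively, it argues geometrically on the weight vector: decomposing $\bw=\bw_\perp+\bw_\parallel$ relative to the shortcut direction $\Delta_y$ (respectively $\Delta$), a vanishing conditional penalty forces $\|\bw_\perp\|=0$, which yields robustness, while a vanishing marginal (DP-style) penalty forces $\|\bw_\parallel\|=0$ unless $P_s=P^\circ$, contradicting non-triviality of the predictor. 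To repair your proposal you would either strengthen the hypothesis of part 2 from mean-based EO to distributional separation $f(\bX)\indep_{P_s}V\mid Y$ (which is what the MMD penalty actually enforces, and which makes part 2 a special case of Proposition~\ref{prop:separation is robust}), or add an explicit distributional assumption --- e.g., that conditional on $Y=y$, changing $v$ shifts the score $\bw^\top\bX$ only by location --- under which mean-matching genuinely pins down the conditional risk. As written, part 2 claims more than the argument can deliver, and part 1 inherits the same defect.
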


\begin{proof}

We begin with some important definitions relevant to the linear setting described in section~\ref{sec:theory}. Let $\Delta := \bmu_0 - \bmu_1$, where $\bmu_v := \E_{\bx \sim P^\circ}[\bX \mid V=v]$, i.e., 
$\Delta$ is the average change in $\bX$ caused by intervening to change the $V$ under $P^\circ$. 

Define the projection matrix  $\Pi := \Delta (\Delta^\top \Delta)^{-1} \Delta^\top = \|\Delta\|^{-2}_2 \Delta \Delta^\top$, which projects any vector onto $\Delta$.
We then define $\bw_\perp := \Pi \bw$ as the projection of $\bw$ onto the mean distinguishing vector $\Delta$, which can be thought of as the dimension related to the protected attribute i.e., dimensions ``irrelevant'' to the prediction tasks.

Note that for $f$ to be robust, i.e., for $f$ to be in $\cF_{\rinv}$, its corresponding $\bw_\perp$ must be the 0 vector. 

We first prove that M-MMD, which enforces DP, does not lead to optimal robust functions. We prove this statement by contradiction. Note that the proof is nearly identical to that presented by \citep{makar2021causally}, but is stated here with minor differences that highlight its relevance to the fairness setting.

Suppose that there exists some $f \in \cF$ such that M-MMD($f$) =0, and $f \in \cF_{\rinv}$, i.e., $\|\bw_\perp \| = 0$ 

\begin{align*}
     0 & = \left\|\E[\bw^\top \bx_i \mid v_i = 0 ] - \E[\bw^\top \bx_i \mid v_i = 1]\right\| \\
    & = \left\|\E[\bw_\perp^\top \bx_{\perp i} +  \bw_\parallel^\top \bx_{\parallel i} \mid v_i = 0 ] - \E[\bw_\perp^\top \bx_{\perp i} +  \bw_\parallel^\top \bx_{\parallel i} \mid v_i = 1] \right\| \\
    & = \left\|\E[\bw_\parallel^\top \bx_{\parallel i} \mid v_i = 0 ] - \E[\bw_\parallel^\top \bx_{\parallel, i} \mid v_i = 1] \right\| \\
    & = \Big\|\bw_\parallel \big(\E[\bx_{\parallel i} \mid v_i = 0 ] - \E[\bx_{\parallel, i} \mid v_i = 1]\big) \Big\|\\
    & = \Big\|\bw_\parallel (1- \Pi) \big(\E[\bx_i \mid v_i = 0 ] - \E[\bx_i \mid v_i = 1]\big) \Big\|\\
    & = \|\bw_\parallel\| \left\|(1- \Pi) \big(\E[\bx_i \mid v_i = 0 ] - \E[\bx_i \mid v_i = 1]\big) \right\|\\
    & = \|\bw_\parallel\| \left\|(1- \Pi) \Delta_P\right\|, 
\end{align*}
where the first equality holds by definition of the $\mmd$, the fifth equality holds because the two vectors are scalar multiples of the same vector (they are both projections onto the vector orthogonal to $\Delta$) so Cauchy-Schwartz holds with equality. 

Importantly, this shows a contradiction. This is because $\left\|(1- \Pi) \Delta_P\right\| = 0$ if and only if $\Delta_P = \Delta$, i.e., $P = P^\circ$. So $\left\|(1- \Pi) \Delta_P\right\| > 0$. Thus, for $A \left\|(1- \Pi) \Delta_P\right\| =0$ to hold, $\|\bw_\parallel\|$ must be equal to $0$, which means that $f \not \in \cF$. 

Next, we show that both C-MMD and WM-MMD which enforce EO, lead to optimal robust functions. A very similar strategy to the one used to show that M-MMD does not lead to optimal robust functions can be used to show that C-MMD and WM-MMD lead to optimal robust functions. For the sake of completeness, we state them below. 

We show that WM-MMD leads to optimal robust estimators by contradiction. Suppose that there exists a function $f$ with WM-MMD$(f) = 0$ but $\|\bw_\perp \| > 0$

\begin{align*}
     0 & = \left\|\E[\bw^\top \bx_i \mid v_i = 0 ] - \E[\bw^\top \bx_i \mid v_i = 1]\right\| \\
    & = \left\|\E[\bw_\perp^\top \bx_{\perp i} +  \bw_\parallel^\top \bx_{\parallel i} \mid v_i = 0 ] - \E[\bw_\perp^\top \bx_{\perp i} +  \bw_\parallel^\top \bx_{\parallel i} \mid v_i = 1] \right\| \\
    & = \left\|\E[\bw_\parallel^\top \bx_{\parallel i} \mid v_i = 0 ] - \E[\bw_\parallel^\top \bx_{\parallel, i} \mid v_i = 1] 
    + \E[\bw_\perp^\top \bx_{\perp i} \mid v_i = 0 ] - \E[\bw_\perp^\top \bx_{\perp, i} \mid v_i = 1] \right\| \\
    & = \Big\|\bw_\parallel \big(\E[\bx_{\parallel i} \mid v_i = 0 ] - \E[\bx_{\parallel, i} \mid v_i = 1]\big) + 
    \Big\|\bw_\perp \big(\E[\bx_{\perp i} \mid v_i = 0 ] - \E[\bx_{\perp, i} \mid v_i = 1]\big)\Big\|\\
    & = \Big\|\bw_\parallel \cdot 0 +  \bw_\perp \cdot \Pi \big(\E[\bx_i \mid v_i = 0 ] - \E[\bx_i \mid v_i = 1]\big)  \Big\| \\
    & = \Big\| \bw_\perp \cdot \Pi \Delta \Big\| \\
    & = \Big\| \bw_\perp \cdot \Delta \Big\| \\
    & = \left\| \bw_\perp \right\| \left\| \Delta \right\|, 
\end{align*}
where the final equality holds $\bw_\perp$ is a scalar multiple of $\Delta$ since it is the projection of $\bw$ onto $\Delta$. Since $\| \Delta \| > 0$ according to the DAG in figure~\ref{fig:dags}, then it must be that $\left\| \bw_\perp \right\| > 0$, which is a contradiction. 

Finally, we show that C-MMD leads to optimal robust predictors. 
\todo[inline]{This is method 1 of proving the above statement}
We start by defining the difference vector, and the projection matrix conditional on $Y$. 

Let $\Delta_{y,P} := \bmu_{y, 0} - \bmu_{y,1}$, where $\bmu_{y, v} := \E_{\bx \sim P}[\bX \mid V=v, Y=y]$, i.e., 
$\Delta_{y,P}$ is the average change in $\bX$ caused by intervening to change the $V$ among a group defined by $Y=y$, under any $P$. 
Note that under the DAG defined in figure~\ref{fig:dags}, $\Delta_{y,P} = \Delta_{y, P'}$ for any $P, P' \in \cP$. Hence we drop the $P$ subscript. 

Define the projection matrix  $\Pi_y := \Delta_y (\Delta_y^\top \Delta_y)^{-1} \Delta_y = \|\Delta_y\|^{-2}_2 \Delta_y \Delta_y^\top$, which projects any vector onto $\Delta_y$.
We then define $\bw_\perp := \Pi_y \bw$ as the projection of $\bw$ onto the mean distinguishing vector $\Delta_y$, which can be thought of as the dimension related to the protected attribute i.e., dimensions ``irrelevant'' to the prediction tasks.

Following the same steps as the previous two settings. Suppose that there exists a function $f$ with C-MMD$(f) = 0$ but $\|\bw_\perp \| > 0$. Note that C-MMD$(f) = 0$ necessarily implies that $\mmd_y =0$ for $y=0, 1$.

\begin{align*}
     0 & = \left\|\E[\bw^\top \bx_i \mid v_i = 0, y_i =1 ] - \E[\bw^\top \bx_i \mid v_i = 1, y_i = 1]\right\| \\
    & = \left\|\E[\bw_\perp^\top \bx_{\perp i} +  \bw_\parallel^\top \bx_{\parallel i} \mid v_i = 0, y_i = 1 ] - \E[\bw_\perp^\top \bx_{\perp i} +  \bw_\parallel^\top \bx_{\parallel i} \mid v_i = 1, y_i = 1] \right\| \\
    & = \left\|\E[\bw_\parallel^\top \bx_{\parallel i} \mid v_i = 0, y_i = 1  ] - \E[\bw_\parallel^\top \bx_{\parallel, i} \mid v_i = 1, y_i = 1 ] + 
    \E[\bw_\perp^\top \bx_{\perp i} \mid v_i = 0, y_i = 1  ] - \E[\bw_\perp^\top \bx_{\perp, i} \mid v_i = 1, y_i = 1 ]
    \right\| \\
    & = \Big\|\bw_\parallel^\top \big(\E[\bx_{\parallel i} \mid v_i = 0, y_i = 1  ] - \E[\bx_{\parallel, i} \mid v_i = 1, y_i = 1 ]\big) 
    + \bw_\perp^\top \big(\E[\bx_{\perp i} \mid v_i = 0, y_i = 1  ] - \E[\bx_{\perp, i} \mid v_i = 1, y_i = 1 ]\big)
    \Big\|\\
    & = \Big\|\bw_\parallel^\top \cdot 0 + \bw_\perp^\top \Pi_y \Delta_y \ \Big\|\\
    & = \Big\|\bw_\perp^\top \Delta_y \ \Big\|\\ 
    & = \left\|\bw_\perp\right\| \|\Delta_y\| 
\end{align*}
where the final equality holds $\bw_\perp$ is a scalar multiple of $\Delta_y$ since it is the projection of $\bw$ onto $\Delta_y$. Since either $\| \Delta_0 \|$ or $\| \Delta_1 \|$ must be greater than $0$ according to the DAG in figure~\ref{fig:dags}, then it must be that $\left\| \bw_\perp \right\| > 0$, which is a contradiction.

\todo[inline]{This is method 2 of proving the same statement. Will get rid of this one}
Note that the C-MMD penalty implies a stronger form of fairness than the traditional EO criterion. Specifically, the C-MMD implies equality of the distribution over $f(\bX)$ between the two groups $V=0$ and $V=1$ conditional on $Y$, whereas the traditional EO criterion implies equality only of the first moment, that is the mean. 

Because satisfying the C-MMD penalty implies that $P_s(f(\bX) | V=0, Y=y) = P_s(f(\bX) | V=1, Y=y)$, it also implies that the distribution over the loss for these two subgroups is the same. This is because the loss is a deterministic function applied to both distributions. Hence we have that 
$$
R^{0, y}_{P_s} = \bE_{P_s}[\ell(f(\bX), Y) | V=0, Y=y] = \bE_{P_s}[\ell(f(\bX), Y) | V=1, Y=y] = R^{1, y}_{P_s}, 
$$
where $R^{v, y}_{P_s}$ denotes the population error for the subgroup defined by $V=v, Y=y$ under the distribution $P_s$. We use $ R^{y}_{P_s} :=  R^{0, y}_{P_s} = R^{1, y}_{P_s}$. 
Recall that for any $P, P' \in \cP$
$$
P(\bX | V=v, Y=y) = P_s(\bX | \bX^*, V) P_s(\bX^*|Y) = P'(\bX | V=v, Y=y). 
$$

Hence, $R^{y}_{P_s} = R^{y}_{P}$ for any $P \in \cP$. 

Writing the full risk under an arbitrary $P \in \cP$: 

\begin{align*}
    R_{P} & = \sum_{y \in \{0,1\}} P_y(y) [P_{v|y}(0) R^y_{P} +  P_{v|y}(1) R^y_{P}] \\ & = \sum_{y \in \{0,1\}} P_y(y) [P_{v|y}(0) R^y_{P_s} +  P_{v|y}(1) R^y_{P_s}] \\
    & = \sum_{y \in \{0,1\}} P_y(y) R^y_{P_s} [P_{v|y}(0)  +  P_{v|y}(1)] \\
    & = \sum_{y \in \{0,1\}} P_y(y) R^y_{P_s} [P_{v|y}(0)  + (1 - P_{v|y}(0)) ] \\
    & = \sum_{y \in \{0,1\}} P_y(y) R^y_{P_s}.
\end{align*}

Hence for any $P, P' with P \not = P'$, we have that 
$R_{P} = R_{P} = \sum_{y \in \{0,1\}} P_y(y) R^y_{P_s}$. So the difference between the risks for any two functions satisfying the C-MMD penalty $=0$, which is the definition of robustness. This implies that any predictor that satisfied C-MMD is also robust--including models with $\| \bw \| >0$, which is criterion of optimality in our setting. 
\end{proof}

The implications of this statement is that in situations where it is possible to choose which fairness criterion to implement (DP vs. EO), it is more advantageous to choose EO, since it also leads to useful robustness properties.






\end{comment}
\section{Enforcing Separation}\label{sec:method}
The discussion in section~\ref{sec:theory} implies that there are two ways to enforce separation during training. First, it can be enforced directly by encouraging equality between representation distributions conditional on $Y$.
Alternatively, as shown in section~\ref{sec:theory}, it can be enforced indirectly by minimizing predictive risk subject to the IDI criterion, by encouraging equality between the marginal distributions of learned representations $\phi$ under the ideal distribution $P^\circ$.
In this section, we discuss the practical considerations around different implementation schemes and highlight that learning algorithms that enforce separation directly face distinct technical challenges from those that encourage separation through enforcing the IDI robustness criterion. 
Because of this distinction, as we show in Section~\ref{sec:experiments}, the IDI-motivated algorithm can achieve better separation in practice.

We center this discussion on a specific family of approaches that relies on estimates of distributional discrepancies to enforce statistical independences. 
Specifically, we focus the Maximum Mean Discrepancy ($\mmd$) since it is a popular choice in the fairness literature \citep{prost2019h, madras2018learning,louizos2015variational}, and the robustness literature \citep{makar2021causally, guo2021out}.
The $\mmd$ defined as follows:
\begin{thmdef}\label{def:mmd}
Let $Z \sim P_{Z}$, and $Z' \sim P_{Z'}$, be two arbitrary variables. And let $\Omega$ be a class of functions $\omega: \cZ \rightarrow \R$,  
$
    \mmd(\Omega, P_{Z}, P_{Z'}) =\sup_{\omega \in \Omega} \big(\E_{P_{Z}} \omega(Z) - \E_{P_{Z'}} \omega(Z')\big).
$
\end{thmdef}
\noindent When $\Omega$ is set to be a general reproducing kernel Hilbert space (RKHS), the $\mmd$ defines a metric on probability distributions, and is equal to zero if and only if $P_{Z} =P_{Z'}$. We take $\Omega$ to be the RKHS and drop it from our notation. 
$\mmd$-based regularization methods enforce statistical independences at training time by penalizing discrepancies between distributions that would be equal if the independence held.
The $\mmd$ penalty can be applied to the final layer $f$ or to the learned representation $\phi$. Both methods induce the required invariances. We follow the literature in imposing the penalty on the representation $\phi$.

The strategy for directly enforcing separation penalizes discrepancies in representation distributions conditional $Y$ \citep{prost2019toward}. Such a strategy is translated to a learning objective as follows: 
\begin{align}\label{eqn:eo_objective}
    h_{\texttt{C-MMD}}, \phi_{\texttt{C-MMD}}  = \text{argmin}_{h, \phi} \frac{1}{n} \sum\nolimits_{i=1}^n  \ell(h(\phi(\bx_i)), y_i) +
   \alpha \cdot \sum\nolimits_y \widehat{\mmd}^2 (P_{\boldsymbol{\phi}_{0,y}}, P_{\boldsymbol{\phi}_{1, y}}), 
\end{align}
where $P_{\boldsymbol{\phi}_{v, y}} = P(\phi(\bX) | V = v, Y=y)$, $\alpha$ is a parameter picked through cross-validation, $\widehat \mmd^2$ is an estimate of $\mmd^2$. In the experiments below, we use the V-statistic estimator for the $\mmd$ presented in~\cite{gretton2012kernel}. This estimator relies on the radial basis function (RBF), which requires a bandwidth parameter $\gamma$ picked through cross-validation. 

While this strategy is straightforward, it has some practical limitations, especially when training using mini-batches of data in stochastic gradient descent.
Within each batch, C-MMD requires first dividing the population based on $Y$ then estimating the MMD within each subgroup.
This effectively reduces the sample size used for MMD estimation, making the estimates more volatile and less reliable, especially when batch sizes are small.

Meanwhile, a strategy for enforcing separation indirectly through IDI, developed by \citet{makar2021causally}, makes use of a weighted marginal MMD discrepancy (WM-MMD).
WM-MMD requires a marginal estimate of the $\mmd$ computed with respect to $P^\circ$ rather than the observed $P_s$. 
This strategy uses reweighting to manipulate the observed data such that it mimics desirable independencies in $P^\circ$.  
Specifically, it uses the following weights:
$
u(y,v) =\frac{P_s(Y = y)P_s(V = v)}{P_s(Y = y, V = v)}, 
$
such that for each example, $u_i := u(y_i, v_i)$. 
This weighting scheme maps expectations under $P_s$ to expectations under $P^\circ$. The final objective becomes:

\begin{align}\label{eqn:empirical_mmd_loss}
    h_{\texttt{WM-MMD}}, \phi_{\texttt{WM-MMD}}  = \text{argmin}_{h, \phi} \sum\nolimits_{i=1}^n u_i \ell(h(\phi(\bx_i)), y_i) \ + \alpha \cdot \widehat\mmd^2 (P_{\boldsymbol{\phi}^\bu_0}, P_{\boldsymbol{\phi}^\bu_1}),  
\end{align}
where $\widehat\mmd^2 (P_{\boldsymbol{\phi}^\bu_0}, P_{\boldsymbol{\phi}^\bu_1})$ is a weighted estimate of the $\mmd$.
This strategy also has practical challenges.
While WM-MMD does not require this data-slicing, if base rates are too skewed within groups, the weights may be highly variable, and introduce volatility into the regularization.
This is a typical risk with weighted estimators (see, for example \citep{cortes2010learning}). 

Ultimately, the better strategy to employ to enforce separation depends on the context.
For example, we find that WM-MMD is far more stable in our experiments on chest X-rays, and that the instability due to data-slicing induced by C-MMD leads it to exhibit a fairness-performance tradeoff in practice that WM-MMD is less prone to. 
This highlights the value of expanding the toolbox of fairness regularizers to include regularizers targeted at causally compatible robustness criteria.
For more general use, we present a concrete heuristic to choose between the two methods in a given application. 

By comparison to those two approaches, an approach that encourages independence rather than separation can be implemented by penalizing the unweighted prediction loss and the unweighted marginal MMD as follows: 
\begin{align}\label{eqn:unweighted_mmd_loss}
    h_{\texttt{M-MMD}}, \phi_{\texttt{M-MMD}}  = \text{argmin}_{h, \phi} \frac{1}{n} \sum\nolimits_{i=1}^n  \ell(h(\phi(\bx_i)), y_i) \ + \alpha \cdot \widehat\mmd^2 (P_{\boldsymbol{\phi}_0}, P_{\boldsymbol{\phi}_1}).  
\end{align}

\section{Experiments}\label{sec:experiments}
In this section, we empirically demonstrate the implications of the relationships between fairness criteria and robustness criteria in an anti-causal setting.
In this demonstration, we show that several of the arguments made at the population level in Section~\ref{sec:theory} bear out in a practical application.
Specifically, following \citet{jabbour2020deep}, we consider the task of predicting pneumonia ($Y$) from chest x-rays ($\bX$) considering sex to be a protected attribute ($V)$. 
In this context, we demonstrate the close practical connection between separation (as measured with EO) and risk invariance.
First, models that better satisfy EO exhibit greater risk invariance (Proposition~\ref{prop:separation is robust}).
Secondly, we show that training procedures that target risk invariance can yield models that satisfy EO (Proposition~\ref{prop:optimal robust has separation}), and somewhat surprisingly, they perform \emph{better} than models that explicitly trained to satisfy separation.
Third, no such alignment holds between independence (as measured with DP) and robustness (Proposition~\ref{prop:independence bad}).

\paragraph{Data and Test Distribution Design.} 
We conduct this analysis on a publicly available dataset, CheXpert \citep{irvin2019chexpert}.
The data contain 224,316 chest x-rays of 65,240 patients. Each chest x-ray is associated with 12 labels corresponding to different pulmonary conditions. Each label encodes if the corresponding condition is confidently present, confidently absent or might be present with some uncertainty. We select patients who have pneumonia (confidently present), or have no finding (i.e., all conditions are confidently absent). For the former group, we give them the label pneumonia (i.e., $y=1$), for the latter group we give them the label ``healthy'' (i.e., $y=0$). We exclude patients who do not have pneumonia but might have other pulmonary conditions from our analysis since other conditions such as Atelectasis or Lung Edema can be visually similar to pneumonia, making them hard to distinguish from pneumonia without additional information (e.g., medical charts). This leaves us with a total of 21,232 unique x-ray studies. We split the dataset into 70\% examples used for training and validation, while the rest is held out for testing.  

At training time, we undersample patients recorded as women who did not have pneumonia to create setting where a na\"ive predictor might have systematic errors for the under-sampled group. Specifically, we sample the data such that 30\% of the population has pneumonia, i.e.,  $P_s(Y=1) = 0.3$, and the majority of women patients have pneumonia whereas the majority of men patients do not have pneumonia i.e.,  $P_s(V=1 | Y=1) = P_s(V=0 | Y=0) = 0.9$. 
In this setting, as we show later in figure~\ref{fig:auc across dist}, a deep neural network trained in the usual way learns to rely on sex of the patient to predict pneumonia. 

Here, the family of target distributions is the family of distributions that are compatible with the DAG in Figure~\ref{fig:dags}, where the base rates of pneumonia were systematically skewed between sex groups by selective sampling. This set includes all distributions where $P_t(V=1 \mid Y=y)$ is allowed to take on any value between 0 and 1 for each $y$. 
To test our models, we generated 6 test distributions from this family. 
We denote these test distributions $\cP_t = \{P_{0.1},P_{0.3}, P_{0.5}, P_{0.7}, P_{0.9}, P_{0.95}\}$, where $P_{\mu}$ is generated such that  $P_t(V=1 \mid Y=1) = P_t(V=0 \mid Y=0) = \mu$ with $P_t(Y=1)$ held constant.

\textbf{Implementation.} 
We use DensNet-121 \citep{huang2017densely}, pretrained on ImageNet, and fine tuned for our specific task. We use DenseNet because it was shown to outperform other commonly used architectures on the CheXpert dataset \citep{irvin2019chexpert, ke2021chextransfer,jabbour2020deep}. All models are implemented in TensorFlow \citep{tensorflow2015-whitepaper}. We train all models using a batch size of 64, and image sizes 256 $\times$ 256 for 50 epochs. We used publicly available code provided by the authors in \cite{makar2021causally}\footnote{\url{https://github.com/mymakar/causally_motivated_shortcut_removal}}.

In addition to C-MMD, WM-MMD described in section \ref{sec:method}, we implement a deep neural network (DNN) without any robustness or fairness penalties. We note that C-MMD is the same as MinDiff, which is the approach to enforce EO in Tensorflow\footnote{\url{https://www.tensorflow.org/responsible_ai/model_remediation}}.

For the three MMD based models, we need to pick the free parameter $\alpha$ which controls how strictly we enforce the MMD penalty, and $\gamma$, which is the kernel bandwidth needed to compute the MMD. We follow the cross-validation procedure outlined in \citet{makar2021causally}. Specifically, we split the training and validation data into 75\% for training and 25\% for validation. We further split the validation data into 5 ``folds''. We compute the MMD on each of the folds. The MMD is computed the same way as in training, meaning for the conditional MMD penalty, we compute the conditional MMD on each of the 5 folds, and similarly for the weighted marginal MMD, and the marginal MMD. We then compute the mean and standard deviation of the MMD estimate over the 5 folds. Using a T-test, we test if the mean MMD is statistically significantly different from zero. We exclude hyperparameters that yield a non-zero MMD. Out of the remaining hyperparameters, we pick the model that has the best performance, i.e., the lowest logistic loss. For the DNN, we perform L2-regularization. We pick the regularization parameter based on the validation loss. Additional details about hyperparameters are included in the appendix.

\subsection{Results}

\paragraph{\textbf{Sex as a shortcut in pneumonia prediction}}

\begin{wrapfigure}[26]{r}{0.5\textwidth}
\centering
\includegraphics[width=0.5\textwidth]{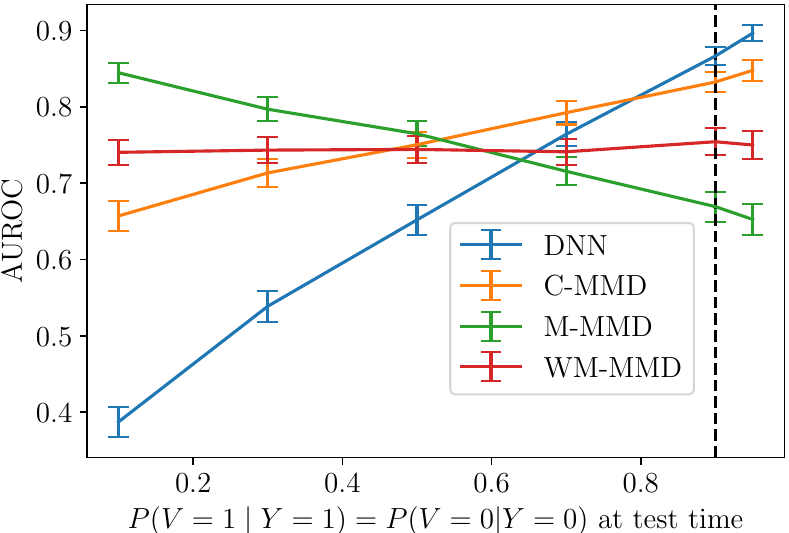}
\caption{Test distribution on the x-axis, and performance measured by AUROC on the y-axis. Dashed line shows training distribution. DNN, which does not incorporate any fairness/ risk invariance criteria has the best in-distribution performance but its performance quickly deteriorates across distribution shifts. M-MMD over-penalizes the correlation between $Y$, $V$. While WM-MMD achieves worse in-distribution performance compared to DNN and C-MMD, it has has better out of distribution performance for distributions most dissimilar to the training distribution.
\label{fig:auc across dist}}
\end{wrapfigure}

In our first set of results, we examine the extent to which each model utilizes information about sex to detect pneumonia.
To do this, we measure the performance of each model across our set of shifted test sets $\cP_t$.
Models that encode information about sex to predict pneumonia (i.e., use sex as a shortcut) will experience a degradation in performance on test distributions where the marginal correlation between label $Y$ and sex $V$ has changed. 
Meanwhile, a model that does not use this shortcut should have invariant performance across these test sets.

For each of the distributions in $\cP_t$, we compute the the area under the receiver operating curve (AUROC) of each predictor on each of the test distributions as a measure of its performance across distribution shifts. To estimate the uncertainty in the AUROC, we create 1000 bootstraps of the test set, and compute the means and standard deviations of the AUROC.

Figure~\ref{fig:auc across dist} shows the results of this analysis. The $x-$axis shows $P(V=1 \mid Y=1) = P(V=0 \mid Y=0)$ at test time, while the $y-$axis shows the corresponding mean AUROC. The vertical dashed line shows the conditional probability at training time.
Notably, DNN, which does not incorporate any fairness/ risk invariance criteria has the most severe dependence on the shortcut, and by relying on sex as a shortcut, it achieves the best in-distribution performance. However, such performance quickly deteriorates on test distributions where the strength of the shortcut association is smaller or even ``flipped''. 
While C-MMD should not, in theory, encode information about sex, we see that its practical implementation yields some dependence on sex, albeit less severe than that of the DNN.
As we show later in this section, this somewhat surprising behaviour of C-MMD is because of its poor finite sample properties, which arise due to data-slicing. 
Meanwhile, M-MMD \emph{uses} the shortcut to satisfy the fairness criterion, which leads to encoding an opposite dependence on sex. 
Finally, while WM-MMD achieves worse in-distribution performance compared to DNN and C-MMD, it has has better out of distribution performance for distributions most dissimilar to the training distribution. This suggests that WM-MMD has the least dependence on sex as a shortcut compared to the other predictors.

\paragraph{\textbf{Alignment between fairness and robustness in anti-causal settings}.}
Here, we examine the extent to which different fairness criteria (separation/independence) and their implementation align with robustness across distributions. 

We empirically measure robustness by computing: 
\begin{align*}
    \text{Robustness} = \big|\max_{P_{\mu} \in \cP_t } R_{P_{\mu}}(f) - \min_{P_{\mu} \in \cP_t } R_{P_{\mu}}(f)\big|, 
\end{align*}
where $R_{P_{\mu}}$ is the logistic loss achieved on the distribution $P_{\mu} \in \cP_t$. Lower values imply higher robustness.
We measured the violations to the EO and DP criteria by generating a test set that is drawn from the same distribution as the training set, and computing:  

\begin{align*}
    & \text{EO violation} = \max_y \big|\E_{P_s}[f(\bX) | Y=y, V=1] - \E_{P_s}[f(\bX) | Y=y, V=0] \big|, \quad \text{and} \\
    & \text{DP violation} = \big|\E_{P_s}[f(\bX) | V=1] - \E_{P_s}[f(\bX) | V=0] \big|.
\end{align*}

Figure~\ref{fig:robustness_fairness} (left, middle) show our measure of robustness on the x-axis. Each subplot shows a different fairness criterion on the y-axis. To estimate the uncertainty in the fairness and robustness metrics, we create 1000 bootstraps of the test set, and compute the means and standard deviations of each of the metrics. Both subplots confirm that the DNN which does not incorporate any fairness/robustness penalties indeed encodes information about sex in the learned representation, and is neither robust nor fair by any metric of fairness. In addition, the results confirm our analysis in section \ref{sec:theory}: we see that better robustness (i.e., risk invariance) tracks with less violations to the EO criterion in figure~\ref{fig:robustness_fairness} (left). The same is not true for the DP criterion in figure~\ref{fig:robustness_fairness} (middle): M-MMD achieves lower violations to the DP criterion compared to WM-MMD yet the former is less robust than the latter. While both C-MMD and WM-MMD are expected to perform favorably, we see that WM-MMD consistently performs better on robustness and fairness.
We revisit this comparison in detail below. 

\paragraph{\textbf{Risk invariance and its empirical relationship to EO and DP.}}
Here, we seek to empirically validate the analysis presented in section \ref{sec:theory}. 
Recall from Definition~\ref{def:robustness} that our robustness criterion, risk invariance, requires that a model achieve the same risk (i.e., the same logistic loss) on test sets sampled according to different distributions with varying correlation between pneumonia and sex.
To measure risk invariance empirically, we computed the logistic loss on each of the test distributions in the set $\cP_t$ (defined above), and measured risk invariance by computing: 
\begin{align*}
    \text{Robustness} = \big|\max_{P_{\mu} \in \cP_t } \hat{R}_{P_{\mu}}(f) - \min_{P_{\mu} \in \cP_t } \hat{R}_{P_{\mu}}(f)\big|, 
\end{align*}
where $\hat{R}_{P_{\mu}}$ is the logistic loss on the distribution $P_{\mu}$. Lower values imply higher robustness.
We also measured the violations to the EO and DP criteria by generating a test set that is drawn from the same distribution as the training set, and computing: 
\begin{align*}
    & \text{EO violation} = \max_y \Big|\bE_{P_s}[f(\bX) | Y=y, V=1] - \bE_{P_s}[f(\bX) | Y=y, V=0] \Big|, \quad \text{and} \\
    & \text{DP violation} = \Big|\bE_{P_s}[f(\bX) | V=1] - \bE_{P_s}[f(\bX) | V=0] \Big|.
\end{align*}

Figure~\ref{fig:robustness_fairness} shows our measure of robustness on the x-axis. Each subplot shows a different fairness criterion on the y-axis. The results are also presented in a table format in the appendix. We estimate the uncertainty in the fairness and robustness metrics by bootstrapping the test set, as described before. All subplots confirm that the DNN which does not incorporate any fairness/robustness penalties indeed encodes information about sex in the learned representation, and is neither robust nor fair by any metric of fairness. In addition, the results confirm our analysis in section \ref{sec:theory}: we see that better robustness (i.e., risk invariance) tracks with less violations to the EO criterion in figure~\ref{fig:robustness_fairness} (left). The same is not true for the DP criterion in figure~\ref{fig:robustness_fairness} (right): M-MMD achieves lower violations to the DP criterion compared to WM-MMD yet the former is less robust than the latter. We note that despite the fact that the robustness of M-MMD and WM-MMD seems overlapping, a simple two sided ttest rejects the null (that the robustness of the two models is the same) with p-value < 0.001

While both C-MMD and WM-MMD are expected to perform favorably, we see that WM-MMD consistently achieves better on robustness and fairness suggesting that the latter is more stable. We will revisit this comparison in detail later. 

\begin{figure}[ht]
\centering
\includegraphics[width=0.9\textwidth]{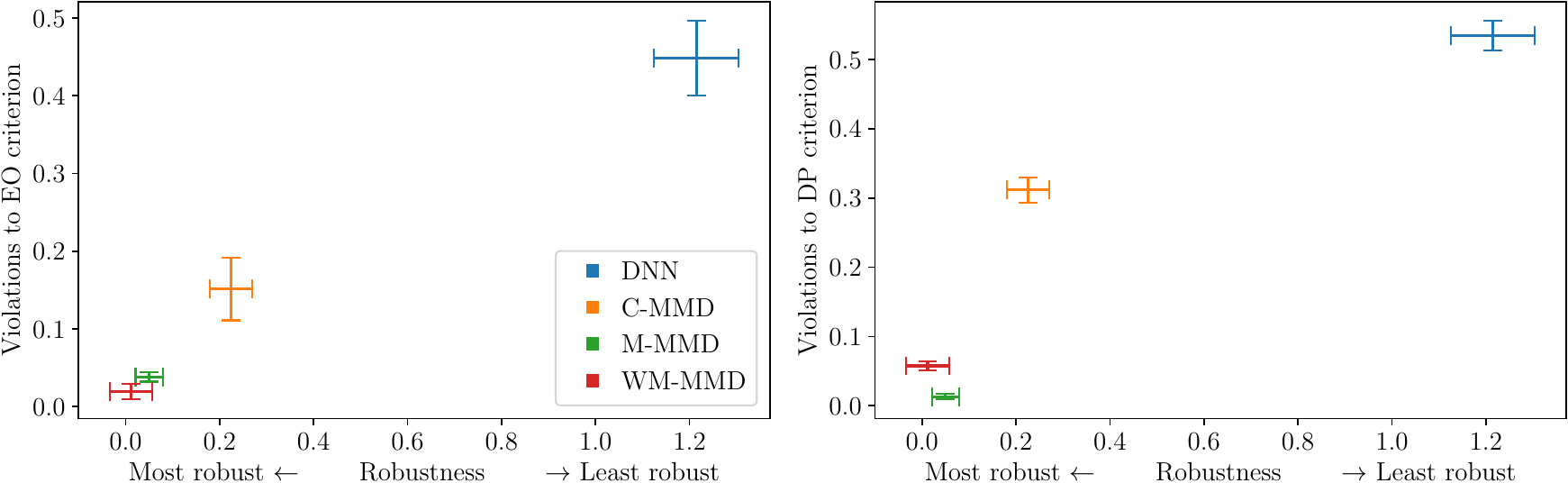}
\caption{Robustness on the x-axis, and different fairness criteria on the y-axis. Left plot shows violations to EO. Right plot shows violations to DP. Models with higher robustness achieve lower fairness violations for EO, but not DP which is consistent with the analysis in section \ref{sec:theory}. The simple DNN relies on sex as a shortcut achieving the worst robustness and fairness properties. WM-MMD outperforms C-MMD achieving better robustness and lower EO violations. \label{fig:robustness_fairness}}
\end{figure}

\paragraph{\textbf{WM-MMD is more stable than C-MMD}.}
We now investigate the surprising discrepancy in performance between WM-MMD and C-MMD: while in theory they both target risk invariance, empirical performance suggests that WM-MMD is more effective. 
\begin{wrapfigure}[14]{R}{0.5\textwidth}
\centering
\includegraphics[width=0.5\textwidth]{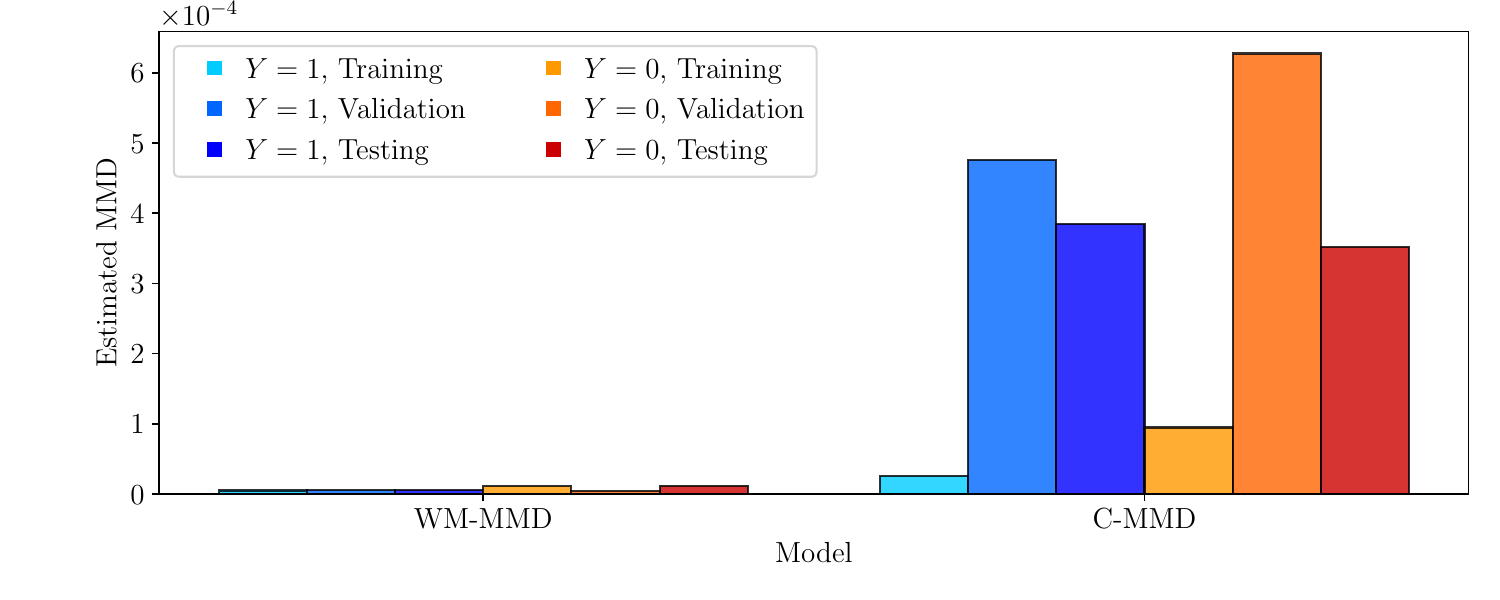}
\caption{Estimated conditional MMD on the training, validation and testing data for WM-MMD and C-MMD. Estimates of the MMD are more stable for the WM-MMD objective than C-MMD. \label{fig:eo_idp_stability}}
\end{wrapfigure}

This discrepancy can be explained, in part, by how well each optimized optimized MMD criterion generalizes to test data.
Figure~\ref{fig:eo_idp_stability} shows the estimated MMD on the training, validation and testing data broken down by the target label. Here the testing data is sampled from the same distribution as the training data to highlight estimation error rather than errors due to data shift. The plot shows several important findings. First, the estimated MMD at training time is a more reliable estimate of the test-set MMD when weighted marginal MMD penalties are used at training and validation time, signaling that data slicing in C-MMD leads to unreliable estimates. Second, the difference between the MMD among the group defined by $Y=1$ compared to the group defined by $Y=0$ is smaller when using WM-MMD. Smaller difference in the MMD between the two groups is important to ensure that the outcomes for both groups defined by the target label do not vary based on the protected attribute. 

This analysis can be repeated in practice to choose between the two penalties: the estimated MMD on the validation data is a reliable proxy for the estimate's generalizability. In practice, if there is a large discrepancy between the training and validation C-MMD estimates, WM-MMD might be a better option and vice versa.

\section{Discussion}
In this work, we showed that by taking into account the specific causal structure of a prediction problem, we are able to draw deep connections between notions of robustness and fairness. 
We established a practical near-equivalence between separation and risk-invariance in an anti-causal prediction setting.
Specifically, in one direction, separation implies risk invariance, while in the other direction, algorithms that are used to obtain performant risk invariant predictors yield predictors that approximately satisfy separation. 
This connection provides performance-oriented motivation for applying the separation criterion in an important class of problems, and provides a new set of tools, borrowed from the robustness literature, to enforce the criterion in practice.

More generally, our findings demonstrate that the understanding causal structure of a given problem can yield useful practical insights, both for understanding how different modeling constraints impact the real world behavior of models, and for implementing more sample efficient models that comply with practitioners' stated goals.
As models are deployed in more complex settings, causality can be a useful tool for motivating, expressing, and satisfying ever-more detailed practitioner requirements. 

\subsubsection*{Broader Impact Statement}
This work is aimed to help practitioners and researchers reason about one aspect of the broader impact of ML models, particularly those that are constrained to satisfy certain statistical parities.
This work could provides motivation for enforcing parities like separation when there is reason to believe that the causal assumptions hold.
However, this motivation is necessarily incomplete, and should serve as one of many different factors when considering how to manage the broader impact of an ML system.
In particular, fairness and other socially salient goals should be determined based on the context of the application, including how the model influences real decision processing, the potential for harm, and how tradeoffs interact with existing social disparities.

\subsubsection*{Acknowledgments}
We are thank the anonymous reviewers, and Victor Veitch for their feedback. This work was partially funded by NSF grant No. 2153083. Any opinions, findings, and conclusions or recommendations expressed in this material are those of the author(s) and do not necessarily reflect the views of the National Science Foundation.


\bibliography{slabs}

\begin{thebibliography}{38}
\providecommand{\natexlab}[1]{#1}
\providecommand{\url}[1]{\texttt{#1}}
\expandafter\ifx\csname urlstyle\endcsname\relax
  \providecommand{\doi}[1]{doi: #1}\else
  \providecommand{\doi}{doi: \begingroup \urlstyle{rm}\Url}\fi

\bibitem[Abadi et~al.(2015)Abadi, Agarwal, Barham, Brevdo, Chen, Citro,
  Corrado, Davis, Dean, Devin, Ghemawat, Goodfellow, Harp, Irving, Isard, Jia,
  Jozefowicz, Kaiser, Kudlur, Levenberg, Man\'{e}, Monga, Moore, Murray, Olah,
  Schuster, Shlens, Steiner, Sutskever, Talwar, Tucker, Vanhoucke, Vasudevan,
  Vi\'{e}gas, Vinyals, Warden, Wattenberg, Wicke, Yu, and
  Zheng]{tensorflow2015-whitepaper}
Mart\'{\i}n Abadi, Ashish Agarwal, Paul Barham, Eugene Brevdo, Zhifeng Chen,
  Craig Citro, Greg~S. Corrado, Andy Davis, Jeffrey Dean, Matthieu Devin,
  Sanjay Ghemawat, Ian Goodfellow, Andrew Harp, Geoffrey Irving, Michael Isard,
  Yangqing Jia, Rafal Jozefowicz, Lukasz Kaiser, Manjunath Kudlur, Josh
  Levenberg, Dandelion Man\'{e}, Rajat Monga, Sherry Moore, Derek Murray, Chris
  Olah, Mike Schuster, Jonathon Shlens, Benoit Steiner, Ilya Sutskever, Kunal
  Talwar, Paul Tucker, Vincent Vanhoucke, Vijay Vasudevan, Fernanda Vi\'{e}gas,
  Oriol Vinyals, Pete Warden, Martin Wattenberg, Martin Wicke, Yuan Yu, and
  Xiaoqiang Zheng.
\newblock {TensorFlow}: Large-scale machine learning on heterogeneous systems,
  2015.
\newblock URL \url{https://www.tensorflow.org/}.
\newblock Software available from tensorflow.org.

\bibitem[Adragna et~al.(2020)Adragna, Creager, Madras, and
  Zemel]{adragna2020fairness}
Robert Adragna, Elliot Creager, David Madras, and Richard Zemel.
\newblock Fairness and robustness in invariant learning: A case study in
  toxicity classification.
\newblock \emph{arXiv preprint arXiv:2011.06485}, 2020.

\bibitem[Arjovsky et~al.(2019)Arjovsky, Bottou, Gulrajani, and
  Lopez-Paz]{arjovsky2019invariant}
Martin Arjovsky, L{\'e}on Bottou, Ishaan Gulrajani, and David Lopez-Paz.
\newblock Invariant risk minimization.
\newblock \emph{arXiv preprint arXiv:1907.02893}, 2019.

\bibitem[Azulay \& Weiss(2018)Azulay and Weiss]{azulay2018deep}
Aharon Azulay and Yair Weiss.
\newblock Why do deep convolutional networks generalize so poorly to small
  image transformations?
\newblock \emph{arXiv preprint arXiv:1805.12177}, 2018.

\bibitem[Barocas et~al.(2019)Barocas, Hardt, and
  Narayanan]{barocas-hardt-narayanan}
Solon Barocas, Moritz Hardt, and Arvind Narayanan.
\newblock \emph{Fairness and Machine Learning}.
\newblock fairmlbook.org, 2019.
\newblock \url{http://www.fairmlbook.org}.

\bibitem[Beery et~al.(2018)Beery, Van~Horn, and Perona]{beery2018recognition}
Sara Beery, Grant Van~Horn, and Pietro Perona.
\newblock Recognition in terra incognita.
\newblock In \emph{Proceedings of the European Conference on Computer Vision
  (ECCV)}, pp.\  456--473, 2018.

\bibitem[Chiappa(2019)]{chiappa2019path}
Silvia Chiappa.
\newblock Path-specific counterfactual fairness.
\newblock In \emph{Proceedings of the AAAI Conference on Artificial
  Intelligence}, volume~33, pp.\  7801--7808, 2019.

\bibitem[Corbett-Davies et~al.(2017)Corbett-Davies, Pierson, Feller, Goel, and
  Huq]{corbett2017algorithmic}
Sam Corbett-Davies, Emma Pierson, Avi Feller, Sharad Goel, and Aziz Huq.
\newblock Algorithmic decision making and the cost of fairness.
\newblock In \emph{Proceedings of the 23rd acm sigkdd international conference
  on knowledge discovery and data mining}, pp.\  797--806, 2017.

\bibitem[Cortes et~al.(2010)Cortes, Mansour, and Mohri]{cortes2010learning}
Corinna Cortes, Yishay Mansour, and Mehryar Mohri.
\newblock Learning bounds for importance weighting.
\newblock In \emph{Nips}, volume~10, pp.\  442--450. Citeseer, 2010.

\bibitem[Creager et~al.(2021)Creager, Jacobsen, and
  Zemel]{pmlr-v139-creager21a}
Elliot Creager, Joern-Henrik Jacobsen, and Richard Zemel.
\newblock Environment inference for invariant learning.
\newblock In Marina Meila and Tong Zhang (eds.), \emph{Proceedings of the 38th
  International Conference on Machine Learning}, volume 139 of
  \emph{Proceedings of Machine Learning Research}, pp.\  2189--2200. PMLR,
  18--24 Jul 2021.
\newblock URL \url{https://proceedings.mlr.press/v139/creager21a.html}.

\bibitem[D'Amour et~al.(2020)D'Amour, Heller, Moldovan, Adlam, Alipanahi,
  Beutel, Chen, Deaton, Eisenstein, Hoffman, et~al.]{d2020underspecification}
Alexander D'Amour, Katherine Heller, Dan Moldovan, Ben Adlam, Babak Alipanahi,
  Alex Beutel, Christina Chen, Jonathan Deaton, Jacob Eisenstein, Matthew~D
  Hoffman, et~al.
\newblock Underspecification presents challenges for credibility in modern
  machine learning.
\newblock \emph{arXiv preprint arXiv:2011.03395}, 2020.

\bibitem[Geirhos et~al.(2018)Geirhos, Temme, Rauber, Sch{\"u}tt, Bethge, and
  Wichmann]{geirhos2018generalisation}
Robert Geirhos, Carlos~RM Temme, Jonas Rauber, Heiko~H Sch{\"u}tt, Matthias
  Bethge, and Felix~A Wichmann.
\newblock Generalisation in humans and deep neural networks.
\newblock In \emph{Advances in neural information processing systems}, pp.\
  7538--7550, 2018.

\bibitem[Geirhos et~al.(2020)Geirhos, Jacobsen, Michaelis, Zemel, Brendel,
  Bethge, and Wichmann]{geirhos2020shortcut}
Robert Geirhos, J{\"o}rn-Henrik Jacobsen, Claudio Michaelis, Richard Zemel,
  Wieland Brendel, Matthias Bethge, and Felix~A Wichmann.
\newblock Shortcut learning in deep neural networks.
\newblock \emph{arXiv preprint arXiv:2004.07780}, 2020.

\bibitem[Gretton et~al.(2012)Gretton, Borgwardt, Rasch, Sch{\"o}lkopf, and
  Smola]{gretton2012kernel}
Arthur Gretton, Karsten~M Borgwardt, Malte~J Rasch, Bernhard Sch{\"o}lkopf, and
  Alexander Smola.
\newblock A kernel two-sample test.
\newblock \emph{The Journal of Machine Learning Research}, 13\penalty0
  (1):\penalty0 723--773, 2012.

\bibitem[Guo et~al.(2021)Guo, Zhang, Liu, and Kiciman]{guo2021out}
Ruocheng Guo, Pengchuan Zhang, Hao Liu, and Emre Kiciman.
\newblock Out-of-distribution prediction with invariant risk minimization: The
  limitation and an effective fix.
\newblock \emph{arXiv preprint arXiv:2101.07732}, 2021.

\bibitem[Hardt et~al.(2016)Hardt, Price, and Srebro]{hardt2016equality}
Moritz Hardt, Eric Price, and Nati Srebro.
\newblock Equality of opportunity in supervised learning.
\newblock \emph{Advances in neural information processing systems},
  29:\penalty0 3315--3323, 2016.

\bibitem[Huang et~al.(2017)Huang, Liu, Van Der~Maaten, and
  Weinberger]{huang2017densely}
Gao Huang, Zhuang Liu, Laurens Van Der~Maaten, and Kilian~Q Weinberger.
\newblock Densely connected convolutional networks.
\newblock In \emph{Proceedings of the IEEE conference on computer vision and
  pattern recognition}, pp.\  4700--4708, 2017.

\bibitem[Hutchinson \& Mitchell(2019)Hutchinson and Mitchell]{hutchinson201950}
Ben Hutchinson and Margaret Mitchell.
\newblock 50 years of test (un) fairness: Lessons for machine learning.
\newblock In \emph{Proceedings of the Conference on Fairness, Accountability,
  and Transparency}, pp.\  49--58, 2019.

\bibitem[Ilyas et~al.(2019)Ilyas, Santurkar, Tsipras, Engstrom, Tran, and
  Madry]{ilyas2019adversarial}
Andrew Ilyas, Shibani Santurkar, Dimitris Tsipras, Logan Engstrom, Brandon
  Tran, and Aleksander Madry.
\newblock Adversarial examples are not bugs, they are features.
\newblock In \emph{Advances in Neural Information Processing Systems}, pp.\
  125--136, 2019.

\bibitem[Irvin et~al.(2019)Irvin, Rajpurkar, Ko, Yu, Ciurea-Ilcus, Chute,
  Marklund, Haghgoo, Ball, Shpanskaya, et~al.]{irvin2019chexpert}
Jeremy Irvin, Pranav Rajpurkar, Michael Ko, Yifan Yu, Silviana Ciurea-Ilcus,
  Chris Chute, Henrik Marklund, Behzad Haghgoo, Robyn Ball, Katie Shpanskaya,
  et~al.
\newblock Chexpert: A large chest radiograph dataset with uncertainty labels
  and expert comparison.
\newblock In \emph{Proceedings of the AAAI conference on artificial
  intelligence}, volume~33, pp.\  590--597, 2019.

\bibitem[Jabbour et~al.(2020)Jabbour, Fouhey, Kazerooni, Sjoding, and
  Wiens]{jabbour2020deep}
Sarah Jabbour, David Fouhey, Ella Kazerooni, Michael~W Sjoding, and Jenna
  Wiens.
\newblock Deep learning applied to chest x-rays: Exploiting and preventing
  shortcuts.
\newblock In \emph{Machine Learning for Healthcare Conference}, pp.\  750--782.
  PMLR, 2020.

\bibitem[Ke et~al.(2021)Ke, Ellsworth, Banerjee, Ng, and
  Rajpurkar]{ke2021chextransfer}
Alexander Ke, William Ellsworth, Oishi Banerjee, Andrew~Y Ng, and Pranav
  Rajpurkar.
\newblock Chextransfer: performance and parameter efficiency of imagenet models
  for chest x-ray interpretation.
\newblock In \emph{Proceedings of the Conference on Health, Inference, and
  Learning}, pp.\  116--124, 2021.

\bibitem[Kilbertus et~al.(2017)Kilbertus, Rojas-Carulla, Parascandolo, Hardt,
  Janzing, and Sch{\"o}lkopf]{kilbertus2017avoiding}
Niki Kilbertus, Mateo Rojas-Carulla, Giambattista Parascandolo, Moritz Hardt,
  Dominik Janzing, and Bernhard Sch{\"o}lkopf.
\newblock Avoiding discrimination through causal reasoning.
\newblock In \emph{Proceedings of the 31st International Conference on Neural
  Information Processing Systems}, pp.\  656--666, 2017.

\bibitem[Kohler-Hausmann(2018)]{kohler2018eddie}
Issa Kohler-Hausmann.
\newblock Eddie murphy and the dangers of counterfactual causal thinking about
  detecting racial discrimination.
\newblock \emph{Nw. UL Rev.}, 113:\penalty0 1163, 2018.

\bibitem[Krueger et~al.(2021)Krueger, Caballero, Jacobsen, Zhang, Binas, Zhang,
  Le~Priol, and Courville]{krueger2021out}
David Krueger, Ethan Caballero, Joern-Henrik Jacobsen, Amy Zhang, Jonathan
  Binas, Dinghuai Zhang, Remi Le~Priol, and Aaron Courville.
\newblock Out-of-distribution generalization via risk extrapolation (rex).
\newblock In \emph{International Conference on Machine Learning}, pp.\
  5815--5826. PMLR, 2021.

\bibitem[Kusner et~al.(2017)Kusner, Loftus, Russell, and
  Silva]{kusner2017counterfactual}
Matt~J Kusner, Joshua~R Loftus, Chris Russell, and Ricardo Silva.
\newblock Counterfactual fairness.
\newblock \emph{arXiv preprint arXiv:1703.06856}, 2017.

\bibitem[Louizos et~al.(2015)Louizos, Swersky, Li, Welling, and
  Zemel]{louizos2015variational}
Christos Louizos, Kevin Swersky, Yujia Li, Max Welling, and Richard Zemel.
\newblock The variational fair autoencoder.
\newblock \emph{arXiv preprint arXiv:1511.00830}, 2015.

\bibitem[Madras et~al.(2018)Madras, Creager, Pitassi, and
  Zemel]{madras2018learning}
David Madras, Elliot Creager, Toniann Pitassi, and Richard Zemel.
\newblock Learning adversarially fair and transferable representations.
\newblock \emph{arXiv preprint arXiv:1802.06309}, 2018.

\bibitem[Makar et~al.(2021)Makar, Packer, Moldovan, Blalock, Halpern, and
  D'Amour]{makar2021causally}
Maggie Makar, Ben Packer, Dan Moldovan, Davis Blalock, Yoni Halpern, and
  Alexander D'Amour.
\newblock Causally-motivated shortcut removal using auxiliary labels.
\newblock \emph{arXiv preprint arXiv:2105.06422}, 2021.

\bibitem[Mitchell et~al.(2021)Mitchell, Potash, Barocas, D'Amour, and
  Lum]{mitchell2021algorithmic}
Shira Mitchell, Eric Potash, Solon Barocas, Alexander D'Amour, and Kristian
  Lum.
\newblock Algorithmic fairness: Choices, assumptions, and definitions.
\newblock \emph{Annual Review of Statistics and Its Application}, 8:\penalty0
  141--163, 2021.

\bibitem[Nabi \& Shpitser(2018)Nabi and Shpitser]{nabi2018fair}
Razieh Nabi and Ilya Shpitser.
\newblock Fair inference on outcomes.
\newblock In \emph{Proceedings of the AAAI Conference on Artificial
  Intelligence}, volume~32, 2018.

\bibitem[Prost \& Qian(2019)Prost and Qian]{prost2019h}
Flavien Prost and Hai Qian.
\newblock H chi, jilin chen, and alex beutel. 2019.
\newblock In \emph{Toward a better trade-off between performance and fairness
  with kernelbased distribution matching.“ML with Guarantees” workshop at
  33rd Conference on Neural Information Processing Systems}, 2019.

\bibitem[Prost et~al.(2019)Prost, Qian, Chen, Chi, Chen, and
  Beutel]{prost2019toward}
Flavien Prost, Hai Qian, Qiuwen Chen, Ed~H Chi, Jilin Chen, and Alex Beutel.
\newblock Toward a better trade-off between performance and fairness with
  kernel-based distribution matching.
\newblock \emph{arXiv preprint arXiv:1910.11779}, 2019.

\bibitem[Pruksachatkun et~al.(2021)Pruksachatkun, Krishna, Dhamala, Gupta, and
  Chang]{pruksachatkun2021does}
Yada Pruksachatkun, Satyapriya Krishna, Jwala Dhamala, Rahul Gupta, and Kai-Wei
  Chang.
\newblock Does robustness improve fairness? approaching fairness with word
  substitution robustness methods for text classification.
\newblock \emph{arXiv preprint arXiv:2106.10826}, 2021.

\bibitem[Sagawa et~al.(2019)Sagawa, Koh, Hashimoto, and
  Liang]{sagawa2019distributionally}
Shiori Sagawa, Pang~Wei Koh, Tatsunori~B Hashimoto, and Percy Liang.
\newblock Distributionally robust neural networks for group shifts: On the
  importance of regularization for worst-case generalization.
\newblock \emph{arXiv preprint arXiv:1911.08731}, 2019.

\bibitem[Sagawa et~al.(2020)Sagawa, Raghunathan, Koh, and
  Liang]{sagawa2020investigation}
Shiori Sagawa, Aditi Raghunathan, Pang~Wei Koh, and Percy Liang.
\newblock An investigation of why overparameterization exacerbates spurious
  correlations.
\newblock \emph{arXiv preprint arXiv:2005.04345}, 2020.

\bibitem[Veitch et~al.(2021)Veitch, D'Amour, Yadlowsky, and
  Eisenstein]{veitch2021counterfactual}
Victor Veitch, Alexander D'Amour, Steve Yadlowsky, and Jacob Eisenstein.
\newblock Counterfactual invariance to spurious correlations: Why and how to
  pass stress tests.
\newblock \emph{arXiv preprint arXiv:2106.00545}, 2021.

\bibitem[Zhang \& Bareinboim(2018)Zhang and Bareinboim]{zhang2018fairness}
Junzhe Zhang and Elias Bareinboim.
\newblock Fairness in decision-making—the causal explanation formula.
\newblock In \emph{Thirty-Second AAAI Conference on Artificial Intelligence},
  2018.

\end{thebibliography}
\bibliographystyle{tmlr}

\appendix
\section{Appendix: Figure 3 in table format}

\begin{table}[H]
\begin{tabular}{|lccc|}
\toprule
\textbf{Model} & \textbf{Robustness} & \textbf{Violations to EO criterion} & \textbf{Violations to DP criterion}   \\
\midrule
C-MMD                              & 0.23 (0.045)                            & 0.15 (0.041)                                            & 0.31 (0.018)                                              \\
WM-MMD                             & 0.01 (0.045)                            & 0.02 (0.01)                                             & 0.06 (0.006)                                             \\
M-MMD                              & 0.05 (0.029)                            & 0.04 (0.006)                                            & 0.01 (0.004)                                            \\
DNN                                & 1.21 (0.09)                             & 0.45 (0.048)                                            & 0.53 (0.021) \\
\bottomrule         
\end{tabular}
\end{table}

\section{Appendix: Implementation details}
For C-MMD, M-MMD and WM-MMD, we use cross validation to get the optimal value for $\alpha$ and $\gamma$. For $\alpha$, we pick from the values $[1e3, 1e5, 1e7]$ and for $\gamma$, we pick from the values $[10, 100, 1000]$. For the DNN, we pick the $L2$ penalty on the model weights from $[0.0, 0.0001, 0.001]$. 

Training each model takes roughly 2.5 hours using a Tesla T4 GPU. We train 30 models to generate the results presented in the main paper for a total of roughly 75 hours of compute time.

\end{document}